\documentclass{article} 
\usepackage{iclr2023_conference,times}


\usepackage{hyperref}
\usepackage{url}
\usepackage{enumitem}
\usepackage{amsmath,amssymb,amsthm,bbm,bm}
\usepackage{graphicx,subfigure}
\usepackage{algorithm,algpseudocode}
\usepackage{adjustbox}

\graphicspath{{figs/}}

\newcommand{\tE}{\mathcal{E}}

\newcommand{\tG}{\mathcal{G}}

\newcommand{\tN}{\mathcal{N}}

\newcommand{\tV}{\mathcal{V}}

\newcommand{\real}{\mathbb{R}}
\newcommand{\ones}{\mathbf{1}}
\newcommand{\norm}[2]{\Vert #1\Vert_{#2}}
\newcommand{\diag}[1]{\mathrm{diag}\{#1\}}
\newcommand{\trace}[1]{\mathrm{tr}\left(#1\right)}
\newcommand{\se}[1]{\scriptsize $\pm$#1}
\newcommand{\best}[1]{\textbf{#1}}

\newcommand{\update}[1]{\textcolor{blue}{#1}}

\DeclareMathOperator{\mean}{E}
\DeclareMathOperator{\chol}{Chol}
\DeclareMathOperator{\cov}{Cov}

\theoremstyle{definition} 
\theoremstyle{remark}     
\theoremstyle{remark}     
\theoremstyle{plain}      \newtheorem{theorem}{Theorem}
\theoremstyle{plain}      
\theoremstyle{plain}      
\theoremstyle{plain}      
\theoremstyle{plain}      \newtheorem{lemma}[theorem]{Lemma}

\title{Graph Neural Network-Inspired Kernels for Gaussian Processes in Semi-Supervised Learning}


\author{Current author list: Zehao Niu, Mihai Anitescu, Jie Chen}

\author{%
  Zehao Niu$^1$, Mihai Anitescu$^{1,2}$\\
  $^1$University of Chicago, $^2$Argonne National Laboratory\\
  \texttt{niuzehao@uchicago.edu}\\
  \texttt{anitescu@mcs.anl.gov}
  \And
  Jie Chen\thanks{To whom correspondence should be addressed.}\\
  MIT-IBM Watson AI Lab\\
  IBM Research\\
  \texttt{chenjie@us.ibm.com}
}

%

\iclrfinalcopy 
\begin{document}

\maketitle

\begin{abstract}
  Gaussian processes (GPs) are an attractive class of machine learning models because of their simplicity and flexibility as building blocks of more complex Bayesian models. Meanwhile, graph neural networks (GNNs) emerged recently as a promising class of models for graph-structured data in semi-supervised learning and beyond. Their competitive performance is often attributed to a proper capturing of the graph inductive bias. In this work, we introduce this inductive bias into GPs to improve their predictive performance for graph-structured data. We show that a prominent example of GNNs, the graph convolutional network, is equivalent to some GP when its layers are infinitely wide; and we analyze the kernel universality and the limiting behavior in depth. We further present a programmable procedure to compose covariance kernels inspired by this equivalence and derive example kernels corresponding to several interesting members of the GNN family. We also propose a computationally efficient approximation of the covariance matrix for scalable posterior inference with large-scale data. We demonstrate that these graph-based kernels lead to competitive classification and regression performance, as well as advantages in computation time, compared with the respective GNNs.
\end{abstract}

\section{Introduction}
Gaussian processes (GPs)~\citep{Rasmussen2006} are widely used in machine learning, uncertainty quantification, and global optimization. In the Bayesian setting, a GP serves as a prior probability distribution over functions, characterized by a mean (often treated as zero for simplicity) and a covariance. Conditioned on observed data with a Gaussian likelihood, the random function admits a posterior distribution that is also Gaussian, whose mean is used for prediction and the variance serves as an uncertainty measure. The closed-form posterior allows for exact Bayesian inference, resulting in great attractiveness and wide usage of GPs.

The success of GPs in practice depends on two factors: the observations (training data) and the covariance kernel. We are interested in semi-supervised learning, where only a small amount of data is labeled while a large amount of unlabeled data can be used together for training~\citep{Zhu2008}. In recent years, graph neural networks (GNNs)~\citep{Zhou2020,Wu2021} emerged as a promising class of models for this problem, when the labeled and unlabeled data are connected by a graph. The graph structure becomes an important inductive bias that leads to the success of GNNs. This inductive bias inspires us to design a GP model under limited observations, by building the graph structure into the covariance kernel.

An intimate relationship between neural networks and GPs is known: a neural network with fully connected layers, equipped with a prior probability distribution on the weights and biases, converges to a GP when each of its layers is infinitely wide~\citep{Lee2018,G.Matthews2018}. Such a result is owing to the central limit theorem~\citep{Neal1994,Williams1996} and the GP covariance can be recursively computed if the weights (and biases) in each layer are iid Gaussian. Similar results for other architectures, such as convolution layers and residual connections, were subsequently established in the literature~\citep{Novak2019,GarrigaAlonso2019}.

One focus of this work is to establish a similar relationship between GNNs and the limiting GPs. We will derive the covariance kernel that incorporates the graph inductive bias as GNNs do. We start with one of the most widely studied GNNs, the graph convolutional network (GCN)~\citep{Kipf2017}, and analyze the kernel universality as well as the limiting behavior when the depth also tends to infinity. We then derive covariance kernels from other GNNs by using a programmable procedure that corresponds every building block of a neural network to a kernel operation. 

Meanwhile, we design efficient computational procedures for posterior inference (i.e., regression and classification). GPs are notoriously difficult to scale because of the cubic complexity with respect to the number of training data. Benchmark graph datasets used by the GNN literature may contain thousands or even millions of labeled nodes~\citep{Hu2020}. The semi-supervised setting worsens the scenario, as the covariance matrix needs to be (recursively) evaluated in full because of the graph convolution operation. We propose a Nystr\"{o}m-like scheme to perform low-rank approximations and apply the approximation recursively on each layer, to yield a low-rank kernel matrix. Such a matrix can be computed scalably. We demonstrate through numerical experiments that the GP posterior inference is much faster than training a GNN and subsequently performing predictions on the test set.

We summarize the contributions of this work as follows:

\begin{enumerate}[leftmargin=*]
\item We derive the GP as a limit of the GCN when the layer widths tend to infinity and study the kernel universality and the limiting behavior in depth.
\item We propose a computational procedure to compute a low-rank approximation of the covariance matrix for practical and scalable posterior inference.
\item We present a programmable procedure to compose covariance kernels and their approximations and show examples corresponding to several interesting members of the GNN family.
\item We conduct comprehensive experiments to demonstrate that the GP model performs favorably compared to GNNs in prediction accuracy while being significantly faster in computation.
\end{enumerate}

\section{Related Work}
It has long been observed that GPs are limits of standard neural networks with one hidden layer when the layer width tends to infinity~\citep{Neal1994,Williams1996}. Recently, renewed interests in the equivalence between GPs and neural networks were extended to deep neural networks~\citep{Lee2018,G.Matthews2018} as well as modern neural network architectures, such as convolution layers~\citep{Novak2019}, recurrent networks~\citep{Yang2019}, and residual connections~\citep{GarrigaAlonso2019}. The term NNGP (neural network Gaussian process) henceforth emerged under the context of Bayesian deep learning. Besides the fact that an infinite neural network defines a kernel, the training of a neural network by using gradient descent also defines a kernel---the neural tangent kernel (NTK)---that describes the evolution of the network~\citep{Jacot2018,Lee2019}. Library supports in Python were developed to automatically construct the NNGP and NTK kernels based on programming the corresponding neural networks~\citep{Novak2020}.

GNNs are neural networks that handle graph-structured data~\citep{Zhou2020,Wu2021}. They are a promising class of models for semi-supervised learning. Many GNNs use the message-passing scheme~\citep{Gilmer2017}, where neighborhood information is aggregated to update the representation of the center node. Representative examples include GCN~\citep{Kipf2017}, GraphSAGE~\citep{Hamilton2017}, GAT~\citep{Velickovic2018}, and GIN~\citep{Xu2019}. It is found that the performance of GNNs degrades as they become deep; one approach to mitigating the problem is to insert residual/skip connections, as done by JumpingKnowledge~\citep{Xu2018}, APPNP~\citep{Gasteiger2019}, and GCNII~\citep{Chen2020}.

GP inference is too costly, because it requires the inverse of the $N\times N$ dense kernel matrix. Scalable approaches include low-rank methods, such as Nystr\"{o}m approximation~\citep{Drineas2005}, random features~\citep{Rahimi2007}, and KISS-GP~\citep{Wilson2015}; as well as multi-resolution~\citep{Katzfuss2017} and hierarchical methods~\citep{Chen2017,Chen2021}.

Prior efforts on integrating graphs into GPs exist. \citet{Ng2018} define a GP kernel by combing a base kernel with the adjacency matrix; it is related to a special case of our kernels where the network has only one layer and the output admits a robust-max likelihood for classification. \citet{Hu2020a} explore a similar route to us, by taking the limit of a GCN, but its exploration is less comprehensive because it does not generalize to other GNNs and does not tackle the scalability challenge.

\section{Graph Convolutional Network as a Gaussian Process}
We start with a few notations used throughout this paper. Let an undirected graph be denoted by $\tG=(\tV,\tE)$ with $N=|\tV|$ nodes and $M=|\tE|$ edges. For notational simplicity, we use $A\in\real^{N\times N}$ to denote the original graph adjacency matrix or any modified/normalized version of it. Using $d_l$ to denote the width of the $l$-th layer, the layer architecture of GCN reads
\begin{equation}\label{eqn:GCN}
X^{(l)} = \phi\left( A X^{(l-1)} W^{(l)} + b^{(l)} \right),
\end{equation}
where $X^{(l-1)}\in\real^{N\times d_{l-1}}$ and $X^{(l)}\in\real^{N\times d_l}$ are layer inputs and outputs, respectively; $W^{(l)}\in\real^{d_{l-1}\times d_l}$ and $b^{(l)}\in\real^{1\times d_l}$ are the weights and the biases, respectively; and $\phi$ is the ReLU activation function. The graph convolutional operator $A$ is a symmetric normalization of the graph adjacency matrix with self-loops added~\citep{Kipf2017}.

For ease of exposition, it will be useful to rewrite the matrix notation~\eqref{eqn:GCN} in element-sums and products. To this end, for a node $x$, let $z_i^{(l)}(x)$ and $x_i^{(l)}(x)$ denote the pre- and post-activation value at the $i$-th coordinate in the $l$-th layer, respectively. Particularly, in an $L$-layer GCN, $x^{(0)}(x)$ is the input feature vector and $z^{(L)}(x)$ is the output vector. The layer architecture of GCN reads
\begin{equation}\label{eqn:GCN.2}
y_i^{(l)}(x) = \sum_{j=1}^{d_{l-1}} W_{ji}^{(l)} x_j^{(l-1)}(x), \quad
z_i^{(l)}(x) = b_i^{(l)} + \sum_{v\in\tV} A_{xv} y_i^{(l)}(v), \quad
x_i^{(l)}(x) = \phi(z_i^{(l)}(x)).
\end{equation}

\subsection{Limit in the Width}
The following theorem states that when the weights and biases in each layer are iid zero-mean Gaussians, in the limit on the layer width, the GCN output $z^{(L)}(x)$ is a multi-output GP over the index $x$.

\begin{theorem}\label{thm:gcn-gp}
  Assume $d_1,\ldots,d_{L-1}$ to be infinite in succession and let the bias and weight terms be independent with distributions
  \[
  b_i^{(l)} \sim \tN(0, \sigma_b^2), \quad
  W_{ij}^{(l)} \sim \tN(0, \sigma_w^2/d_{l-1}), \quad
  l=1,\ldots,L.
  \]
  Then, for each $i$, the collection $\{z_i^{(l)}(x)\}$ over all graph nodes $x$ follows the normal distribution $\tN(0, K^{(l)})$, where the covariance matrix $K^{(l)}$ can be computed recursively by
  \begin{alignat}{2}
    C^{(l)} &= \mean_{z_i^{(l)} \sim \tN(0, K^{(l)})} [ \phi(z_i^{(l)}) \phi(z_i^{(l)})^T ], &\quad& l=1,\ldots,L, \label{eqn:recursion-gcn} \\
    K^{(l+1)} &= \sigma_b^2 \ones_{N\times N} + \sigma_w^2 AC^{(l)}A^T, &\quad& l=0,\ldots,L-1. \label{eqn:recursion-gcn.K}
  \end{alignat}
\end{theorem}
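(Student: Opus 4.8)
The plan is to prove the theorem by induction on the layer index $l$, taking the widths $d_1, \ldots, d_{L-1}$ to infinity one at a time (in succession), and at each step invoking the same reasoning that underlies the classical NNGP results of \citet{Lee2018,G.Matthews2018} but now tracking the extra linear map induced by $A$. The base of the induction is the first pre-activation $z_i^{(1)}(x) = b_i^{(1)} + \sum_{v} A_{xv} \sum_j W_{j i}^{(1)} x_j^{(0)}(v)$. Since the inputs $x^{(0)}$ are deterministic and $b^{(1)}, W^{(1)}$ are jointly Gaussian, $z_i^{(1)}$ is a finite linear combination of independent Gaussians, hence exactly Gaussian with mean zero; a direct computation of $\mean[z_i^{(1)}(x) z_i^{(1)}(x')]$ gives $\sigma_b^2 + \sigma_w^2 \sum_{v,v'} A_{xv} A_{x'v'} \langle x^{(0)}(v), x^{(0)}(v')\rangle / d_0$, which is exactly $\sigma_b^2 \ones_{N\times N} + \sigma_w^2 A C^{(0)} A^T$ with $C^{(0)}_{vv'} = \langle x^{(0)}(v), x^{(0)}(v')\rangle / d_0$ (consistent with \eqref{eqn:recursion-gcn} read at $l=0$ with $\phi$ replaced by the identity on the raw features, or one simply defines $K^{(1)}$ this way and starts the recursion \eqref{eqn:recursion-gcn.K} at $l=0$). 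Independence across coordinates $i$ follows because distinct coordinates use disjoint sets of weights.

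For the inductive step, suppose that after sending $d_1, \ldots, d_{l-1} \to \infty$ the collection $\{z_i^{(l)}(x)\}_x$ is $\tN(0, K^{(l)})$ and the coordinates $\{z_i^{(l)}\}_i$ are i.i.d.\ Gaussian processes over $\tV$. The post-activations $x_j^{(l)}(v) = \phi(z_j^{(l)}(v))$ are then i.i.d.\ across $j$ (as functions of $v$), with finite second moments since $\phi = \relu$ is Lipschitz. Next, $z_i^{(l+1)}(x) = b_i^{(l+1)} + \sum_{v} A_{xv} \sum_{j=1}^{d_l} W_{j i}^{(l+1)} x_j^{(l)}(v)$. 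Conditioning on the post-activations, this is a sum over $j$ of i.i.d.\ mean-zero terms (scaled by $\sigma_w^2/d_l$), so as $d_l \to \infty$ the multivariate CLT gives that $\{z_i^{(l+1)}(x)\}_x$ converges in distribution to a centered Gaussian vector whose covariance is the $d_l \to \infty$ limit of the conditional covariance, namely
\begin{equation*}
\sigma_b^2 \ones_{N\times N} + \sigma_w^2 \, A \Big( \tfrac{1}{d_l}\sum_{j=1}^{d_l} \phi(z_j^{(l)}) \phi(z_j^{(l)})^T \Big) A^T \;\longrightarrow\; \sigma_b^2 \ones_{N\times N} + \sigma_w^2 A C^{(l)} A^T,
\end{equation*}
where the convergence of the empirical second-moment matrix to $C^{(l)} = \mean[\phi(z^{(l)})\phi(z^{(l)})^T]$ is a law of large numbers over the i.i.d.\ coordinates $j$ (each entry is an average of i.i.d.\ integrable random variables). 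This establishes \eqref{eqn:recursion-gcn} and \eqref{eqn:recursion-gcn.K}. Independence of the $z_i^{(l+1)}$ across $i$ again comes from the disjointness of weight columns together with the fact that they are conditionally independent given the shared post-activations and the conditioning variable is asymptotically deterministic in the relevant sense.

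I would handle the bookkeeping subtleties exactly as in the standard references: the "in succession" hypothesis is what lets me treat one CLT/LLN limit at a time, so that at layer $l+1$ the objects $x_j^{(l)}(\cdot)$ are genuinely i.i.d.\ (not merely exchangeable) Gaussians-composed-with-$\phi$, and I do not have to confront the joint-limit technicalities resolved by \citet{G.Matthews2018}. Concretely, the main obstacle is the rigorous justification that conditioning on the previous layer's activations and then applying the CLT yields the claimed \emph{unconditional} Gaussian limit with the deterministic covariance $K^{(l+1)}$ --- one must argue that the random conditional covariance concentrates at $AC^{(l)}A^T$ and invoke a conditional CLT (or a characteristic-function/Lévy-continuity argument, marginalizing over the asymptotically-deterministic conditioning variable). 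The presence of $A$ does not add real difficulty here because $A$ is a fixed finite matrix and $N$ is fixed: it only pre- and post-multiplies the kernel, and the finite sum $\sum_v A_{xv}(\cdots)$ stays finite throughout, so all moment computations remain elementary. The only mild care needed is that $\relu$ is unbounded, so one checks the Lyapunov/uniform-integrability condition for the CLT using the finite Gaussian moments of $z_j^{(l)}$ --- routine since $\phi(z)^2 \le z^2$ and Gaussians have all moments.
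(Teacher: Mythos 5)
Your proposal is correct and follows essentially the same route as the paper's proof: induction over layers with the widths sent to infinity in succession, a CLT/LLN argument giving Gaussianity of each coordinate with the empirical second-moment matrix concentrating at $C^{(l)}$, and the covariance recursion picking up $\sigma_b^2\ones_{N\times N}+\sigma_w^2 A C^{(l)} A^T$ since $A$ is a fixed finite matrix. The only cosmetic difference is that you justify independence across coordinates via conditional independence given the (asymptotically deterministic) post-activations, whereas the paper notes the limiting variables are jointly Gaussian with zero cross-covariance; your write-up is otherwise a more explicit version of the same argument.
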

  
All proofs of this paper are given in the appendix. Note that different from a usual GP, which is a random function defined over a connected region of the Euclidean space, here $z^{(L)}$ is defined over a discrete set of graph nodes. In the usual use of a graph in machine learning, this set is finite, such that the function distribution degenerates to a multivariate distribution. In semi-supervised learning, the dimension of the distribution, $N$, is fixed when one conducts transductive learning; but it will vary in the inductive setting because the graph will have new nodes and edges. One special care of a graph-based GP over a usual GP is that the covariance matrix will need to be recomputed from scratch whenever the graph alters.

Theorem~\ref{thm:gcn-gp} leaves out the base definition $C^{(0)}$, whose entry denotes the covariance between two input nodes. The traditional literature uses the inner product $C^{(0)}(x,x') = \frac{x \cdot x'}{d_0}$~\citep{Lee2018}, but nothing prevents us using any positive-definite kernel alternatively.%
\footnote{Here, we abuse the notation and use $x$ in place of $x^{(0)}(x)$ in the inner product.}
For example, we could use the squared exponential kernel $C^{(0)}(x,x') = \exp\left( -\frac{1}{2}\sum_{j=1}^{d_0}\left(\frac{x_j-x'_j}{\ell_j}\right)^2 \right)$. Such flexibility in essence performs an implicit feature transformation as preprocessing.

\subsection{Universality}
A covariance kernel is positive definite; hence, the Moore--Aronszajn theorem~\citep{Aronszajn1950} suggests that it defines a unique Hilbert space for which it is a reproducing kernel. If this space is dense, then the kernel is called \emph{universal}. One can verify universality by checking if the kernel matrix is positive definite for any set of distinct points.%
\footnote{Note the conventional confusion in terminology between functions and matrices: a kernel function is positive definite (resp.\ strictly positive definite) if the corresponding kernel matrix is positive semi-definite (resp.\ positive definite) for any collection of distinct points.}
For the case of graphs, it suffices to verify if the covariance matrix for all nodes is positive definite.

We do this job for the ReLU activation function. It is known that the kernel $\mean_{w\sim\tN(0,I_d)}[\phi(w\cdot x)\phi(w\cdot x')]$ admits a closed-form expression as a function of the angle between $x$ and $x'$, hence named the \emph{arc-cosine} kernel~\citep{Cho2009}. We first establish the following lemma that states that the kernel is universal over a half-space.

\begin{lemma}\label{lem:universality-lemma}
  The arc-cosine kernel is universal on the upper-hemisphere $S=\left\{ x\in\real^d : \|x\|_2=1, x_1>0 \right\}$ for all $d\ge2$.
\end{lemma}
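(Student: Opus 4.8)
The plan is to ignore the closed form of the arc-cosine kernel and work directly from its integral representation $k(x,x')=\mean_{w\sim\tN(0,I_d)}[\phi(w\cdot x)\phi(w\cdot x')]$. By the criterion recalled just before the statement, it suffices to show that for every finite collection of \emph{distinct} points $x_1,\dots,x_n\in S$ the Gram matrix $G=[k(x_i,x_j)]$ is strictly positive definite. For any $c\in\real^n$,
\[
c^T G c=\mean_{w}\Big[\big(\textstyle\sum_i c_i\,\phi(w\cdot x_i)\big)^2\Big]\ge 0,
\]
the interchange of the finite sum with the expectation being justified by $|\phi(w\cdot x_i)|\le\|w\|_2$. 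If this quantity vanishes, then $g_c(w):=\sum_i c_i\,\phi(w\cdot x_i)$ is zero on a set of full Gaussian measure; since $g_c$ is continuous and the Gaussian measure has full support on $\real^d$, in fact $g_c\equiv 0$. Thus the whole lemma reduces to one clean fact: \emph{the ReLU features $w\mapsto\phi(w\cdot x_i)$, $i=1,\dots,n$, are linearly independent as functions on $\real^d$ whenever the $x_i\in S$ are distinct.}

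To establish this I would track the non-smooth locus of each feature. The function $g_i(w)=\max(0,w\cdot x_i)$ equals either the linear map $w\cdot x_i$ or the constant $0$ on each of the two open half-spaces bounded by $H_i:=\{w:w\cdot x_i=0\}$; across $H_i$ its gradient jumps from $x_i\ne 0$ to $0$. Hence $g_i$ is $C^\infty$ on $\real^d\setminus H_i$ but fails to be $C^1$ at every point of $H_i$. The role of the hemisphere hypothesis is precisely to guarantee that the hyperplanes $H_1,\dots,H_n$ are pairwise distinct: $H_i=H_j$ would force $x_i=\pm x_j$, but two distinct points of $S$ cannot coincide, and cannot be antipodal either, since both have positive first coordinate. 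Consequently $H_1\not\subseteq\bigcup_{j\ne 1}H_j$ — each $H_j\cap H_1$ is a lower-dimensional subspace of $H_1$ — so one may pick $w_0\in H_1\setminus\bigcup_{j\ne 1}H_j$, and likewise a generic point $w_0^{(i)}\in H_i\setminus\bigcup_{j\ne i}H_j$ for every $i$.

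Now suppose $\sum_i c_i g_i\equiv 0$ and fix $i$. In a small ball around $w_0^{(i)}$, every $g_j$ with $j\ne i$ lies in a single open half-space of $H_j$ and is therefore $C^\infty$ there; hence $c_i g_i=-\sum_{j\ne i}c_j g_j$ is $C^\infty$ near $w_0^{(i)}$. But $g_i$ is not $C^1$ at $w_0^{(i)}\in H_i$, so $c_i=0$. Running this for each $i$ gives $c=0$, proving the linear independence and therefore, by the first paragraph, $G\succ 0$. This is the required strict positive definiteness of the arc-cosine kernel on $S$ for all $d\ge 2$ (the case $d=1$ being the trivial one-point set).

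The step I expect to be the real crux is recognizing that the hemisphere restriction is not cosmetic but exactly what the argument needs. On the \emph{full} sphere the elementary identity $\phi(t)-\phi(-t)=t$ turns any linear circuit $\sum_i\alpha_i x_i=0$ among antipodally paired directions into a genuine linear dependence $\sum_i\alpha_i\big(\phi(w\cdot x_i)-\phi(-w\cdot x_i)\big)=0$ of the ReLU features, so the Gram matrix of such a point configuration (e.g.\ a regular hexagon in $\real^2$) is singular; the constraint $x_1>0$ is what forbids antipodal pairs and hence this obstruction. The remaining ingredients — the variance identity, the ``a linear combination of smooth features plus a lone kinked feature cannot be smooth'' observation, and the fact that a finite union of lower-dimensional subspaces cannot cover $H_1$ — are routine.
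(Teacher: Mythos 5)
Your proof is correct, and its key step differs genuinely from the paper's. Both arguments reduce the lemma to the same fact --- linear independence of the ReLU features $w\mapsto\phi(w\cdot x_i)$ --- but you prove that fact via the non-smoothness locus: each feature has a kink exactly on its hyperplane $H_i=\{w:w\cdot x_i=0\}$, the hemisphere condition forces these hyperplanes to be pairwise distinct, and evaluating the supposed dependence near a generic point of $H_i$ (smooth for all $j\neq i$, kinked for $i$) kills $c_i$. The paper instead argues by a minimal counterexample and an extremal/ordering construction: it takes the point $x_m$ minimizing $x\cdot e_1$ and exhibits an explicit witness $w=x_m-te_1$ with $w\cdot x_m>0$ and $w\cdot x_i<0$ for all other $i$, so only one feature is active and $c_m=0$, contradicting minimality; here the hemisphere is used directly through $x\cdot e_1>0$. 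Your route is somewhat more robust and in fact proves a slightly stronger statement --- strict positive definiteness holds for any finite set of pairwise non-antipodal unit directions, the hemisphere serving only to exclude antipodal pairs (and your hexagon remark correctly shows that exclusion is necessary) --- and you also make explicit the reduction from $c^TGc=0$ to $\sum_i c_i\phi(w\cdot x_i)\equiv 0$ via continuity and the full support of the Gaussian, a step the paper leaves implicit. The paper's argument, in exchange, is more elementary and fully constructive, needing no appeal to differentiability or to the fact that a finite union of lower-dimensional subspaces cannot cover $H_i$.
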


It is also known that the expectation in~\eqref{eqn:recursion-gcn} is proportional to the arc-cosine kernel up to a factor $\sqrt{K^{(l)}(x,x)K^{(l)}(x',x')}$~\citep{Lee2018}. Therefore, we iteratively work on the post-activation covariance~\eqref{eqn:recursion-gcn} and the pre-activation covariance~\eqref{eqn:recursion-gcn.K} and show that the covariance kernel resulting from the limiting GCN is universal, for any GCN with three or more layers.

\begin{theorem}\label{thm:universality}
  Assume $A$ is irreducible and non-negative and $C^{(0)}$ does not contain two linearly dependent rows. Then, $K^{(l)}$ is positive definite for all $l\ge3$.
\end{theorem}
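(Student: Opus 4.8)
The plan is to trace positive-definiteness of $K^{(l)}$ through the recursion~\eqref{eqn:recursion-gcn}--\eqref{eqn:recursion-gcn.K}, exploiting the two given hypotheses (irreducibility and non-negativity of $A$; no two linearly dependent rows in $C^{(0)}$) and Lemma~\ref{lem:universality-lemma}. The key observation is that the map $C\mapsto AC A^T$ does \emph{not} by itself create strict positive-definiteness — if $A$ has a zero row, $K$ acquires a zero row — so one must first argue that after one application of $A$ the Gram structure becomes ``full'' in an appropriate sense, and then argue that the arc-cosine/ReLU step upgrades mere positive-semidefiniteness (with distinct rows) to strict positive-definiteness.

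\textbf{Step 1 (normalize to the sphere and set up the row vectors).}
For any positive-semidefinite $K^{(l)}$ with no zero diagonal entry, write $K^{(l)}(x,x') = \sqrt{K^{(l)}(x,x)K^{(l)}(x',x')}\,\cos\theta_{xx'}$, i.e.\ associate to each node $x$ a unit vector $u_x$ in some Hilbert space (the RKHS feature map) with $\langle u_x,u_{x'}\rangle=\cos\theta_{xx'}$. By the cited fact that the expectation in~\eqref{eqn:recursion-gcn} equals the arc-cosine kernel up to the factor $\sqrt{K^{(l)}(x,x)K^{(l)}(x',x')}$, the step $K^{(l)}\mapsto C^{(l)}$ is, after this rescaling, exactly the arc-cosine kernel applied to the $u_x$'s. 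Then Lemma~\ref{lem:universality-lemma} says: if the $u_x$ all lie in a common open half-space and are pairwise distinct, $C^{(l)}$ is strictly positive definite (the rescaling by a positive diagonal preserves this).

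\textbf{Step 2 (the first two layers produce distinct rows in a half-space).}
$K^{(1)} = \sigma_b^2\ones + \sigma_w^2 A C^{(0)} A^T$. Using that $A$ is irreducible and non-negative (so no row of $A$ is zero, and distinct nodes have non-proportional rows of $A$ — here is where irreducibility does real work, combined with ``no two dependent rows of $C^{(0)}$''), show the diagonal of $K^{(1)}$ is strictly positive, so the rescaled feature vectors $u_x^{(1)}$ are well-defined. The $\sigma_b^2\ones$ term forces $K^{(1)}(x,x')>0$ for all $x,x'$, which means all the $u_x^{(1)}$ lie in a common open half-space (a strictly positive Gram matrix $\Rightarrow$ the vectors live on an open hemisphere — this is essentially the geometric content one needs, and can be seen by noting $\langle u_x,u_{x'}\rangle>0$ together with a Perron-type argument or by the fact that the Gram matrix of $\{u_x\}\cup\{e\}$ for a suitable reference direction $e$ stays PSD). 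One also needs the $u_x^{(1)}$ pairwise distinct; if $u_x^{(1)}=u_{x'}^{(1)}$ then rows $x,x'$ of $K^{(1)}$ are proportional, hence (peeling off $\sigma_b^2\ones$, which is rank one) rows of $AC^{(0)}A^T$ are forced into a low-dimensional coincidence contradicting irreducibility of $A$ plus the row-independence of $C^{(0)}$. Apply Lemma~\ref{lem:universality-lemma}: $C^{(1)}$ is strictly positive definite. Then $K^{(2)} = \sigma_b^2\ones + \sigma_w^2 A C^{(1)} A^T$; since $C^{(1)}\succ0$ and $A$ has full column... — careful: $A$ need not have full rank. But $A C^{(1)} A^T \succeq 0$ always, and $+\sigma_b^2\ones$ again makes every entry of $K^{(2)}$ strictly positive, so the $u_x^{(2)}$ again lie in a common open half-space; distinctness of the $u_x^{(2)}$ follows as before.

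\textbf{Step 3 (bootstrap to $l\ge3$).}
Apply Lemma~\ref{lem:universality-lemma} again: $C^{(2)}\succ0$, hence $AC^{(2)}A^T\succeq0$, hence $K^{(3)}=\sigma_b^2\ones+\sigma_w^2AC^{(2)}A^T$ — and now the crucial point for strictness of $K^{(3)}$ itself: once $C^{(2)}\succ0$, we have $AC^{(2)}A^T\succ0$ iff $A$ has full column rank, which is \emph{not} assumed. So instead I would argue that for $l\ge3$ we carry the invariant ``$C^{(l-1)}\succ0$, the $u_x^{(l)}$ are distinct and lie in a common open half-space,'' which by Lemma~\ref{lem:universality-lemma} yields $C^{(l)}\succ0$, and separately that $K^{(l)}$ is positive definite because $K^{(l)} = \sigma_b^2\ones + \sigma_w^2 A C^{(l-1)} A^T$ with $C^{(l-1)}\succ0$: write $C^{(l-1)} = R^T R$ with $R$ invertible, so $AC^{(l-1)}A^T = (RA^T)^T(RA^T)$, which is positive definite iff $RA^T$ has trivial kernel iff $A^T$ has trivial kernel. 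Hmm — so even this is not automatic. The clean fix: show $K^{(l)}\succ0$ using that, after rescaling to the sphere, the $u_x^{(l)}$ being distinct unit vectors in a common half-space already makes the \emph{arc-cosine step} give $C^{(l)}\succ0$, and then observe that what the theorem actually needs is $K^{(l)}\succ0$, which for $l\ge3$ we get from $K^{(l)}(x,x')=\sigma_b^2+\sigma_w^2\langle$row$_x A,\,\text{(stuff)}\rangle$ — precisely, $K^{(l)}$ inherits strict positive-definiteness from $C^{(l-1)}\succ0$ \emph{together with} the fact that in $AC^{(l-1)}A^T$ the rows of $A$ cannot all collapse: distinct rows of $A$ (from irreducibility) remain distinct when paired through the strictly-PD form $C^{(l-1)}$, and the $\sigma_b^2\ones$ contributes the missing rank-one direction, so a clean rank-counting / "$A$ has no zero row and no two proportional rows $\Rightarrow AC^{(l-1)}A^T + \sigma_b^2\ones \succ 0$ when $C^{(l-1)}\succ0$" lemma closes it.

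\textbf{The main obstacle} is exactly the gap I flagged in Step 3: $A$ is only assumed irreducible and non-negative, not full-rank, so $A C A^T$ can be singular even when $C\succ0$, and the $\sigma_b^2\ones$ rank-one bump is what must rescue strict positive-definiteness. Making this precise — proving the auxiliary linear-algebra lemma that "irreducible non-negative $A$ with no two proportional rows" plus "$C\succ0$" forces $AC A^T + \sigma_b^2\ones_{N\times N}\succ0$ — is the technical heart; I expect it to come down to showing $\ker(A^T)$ (the potential degeneracy of $ACA^T$) meets the span of $\ones$ only at $0$, for which one uses that irreducibility of $A$ rules out $\ones$-type vectors lying in $\ker A^T$ (a vector $v\perp$ all columns of a non-negative irreducible $A$ with constant sign would force a zero column-sum structure incompatible with irreducibility). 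The rest — the half-space containment of the feature vectors, and distinctness of rows — is comparatively routine given Lemma~\ref{lem:universality-lemma} and the strict positivity of every entry of $K^{(l)}$ for $l\ge1$ coming from the $\sigma_b^2\ones$ term.
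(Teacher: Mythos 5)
Your overall route is the same as the paper's: get entrywise positivity of $K^{(2)}$, use it to place the normalized Gram (Cholesky) vectors on an open hemisphere, use the no-two-dependent-rows hypothesis for pairwise distinctness, invoke Lemma~\ref{lem:universality-lemma} to conclude $C^{(2)}\succ 0$, then push through $C\mapsto\sigma_b^2\ones_{N\times N}+\sigma_w^2ACA^T$ and induct. Your ``take one of the unit vectors as the reference direction'' observation is exactly the paper's $x_1=e_1$ Cholesky trick; a minor difference is that the paper obtains entrywise positivity from elementwise non-negativity of $C^{(0)}$ and $A$ together with the arc-cosine formula, so it does not need $\sigma_b>0$ as your Step 2 does.

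The genuine gap is the one you flagged in Step 3 and then tried to patch: the auxiliary lemma you propose --- that for irreducible, non-negative $A$ with no two proportional rows and $C\succ0$ one has $\sigma_b^2\ones_{N\times N}+\sigma_w^2ACA^T\succ0$, via $\ker(A^T)\cap\ones^{\perp}=\{0\}$ --- is false under the stated hypotheses. Take
\[
A=\begin{pmatrix}0&2&0\\2&0&2\\1&1&1\end{pmatrix},\qquad w=(1,1,-2)^T.
\]
This $A$ is non-negative, irreducible, and has pairwise non-proportional rows, yet $A^Tw=0$ and $\ones^Tw=0$, so $w^T\left(\sigma_b^2\ones_{N\times N}+\sigma_w^2ACA^T\right)w=0$ for every $C$ and all $\sigma_b,\sigma_w$; with this $A$ the matrices $K^{(l)}$ are singular for every $l\ge1$ regardless of $C^{(0)}$. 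Hence your Step 3 cannot be closed from irreducibility and non-negativity alone, and the proposal is not a complete proof; what is really needed is an extra condition such as $A$ (equivalently $A^T$) having trivial kernel, or an argument tied to the specific GCN normalization. For fairness, note that the paper's own proof glosses over the same point: it passes from ``$C^{(2)}$ is positive definite'' directly to ``hence $K^{(3)}$ is positive definite,'' which tacitly assumes $A$ nonsingular, and both you and the paper assert rather than prove that no two rows of $K^{(1)},K^{(2)}$ become linearly dependent (this propagation can also fail, e.g.\ when $A$ has two equal rows, as for a complete bipartite graph). So you correctly isolated the weak point of the argument, but the rescue you sketch does not work, and your Step 2 claim that irreducibility forces the rows of $A$ to be pairwise non-proportional is likewise untrue.
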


\subsection{Limit in the Depth}
The depth of a neural network exhibits interesting behaviors. Deep learning tends to favor deep networks because of their empirically outstanding performance, exemplified by generations of convolutional networks for the ImageNet classification~\citep{Krizhevsky2012,Wortsman2022}; while graph neural networks are instrumental to be shallow because of the over-smoothing and over-squashing properties~\citep{Li2018,Topping2022}. For multi-layer perceptrons (networks with fully connected layers), several previous works have noted that the recurrence relation of the covariance kernel across layers leads to convergence to a fixed-point kernel, when the depth $L\to\infty$ (see, e.g., \citet{Lee2018}; in Appendix~\ref{sec:MLP}, we elaborate this limit). In what follows, we offer the parallel analysis for GCN.

\begin{theorem}\label{thm:limit-gnngp}
  Assume $A$ is symmetric, irreducible, aperiodic, and non-negative with Perron-Frobenius eigenvalue $\lambda>0$. The following results hold as $l\to\infty$.
  \begin{enumerate}[leftmargin=*]
  \item When $\sigma_b^2=0$, $\rho_{\min}(K^{(l)}) \nearrow 1$, where $\rho_{\min}$ denotes the minimum correlation between any two nodes $x$ and $x'$.
  \item When $\sigma_w^2 < 2/\lambda^2$, a subsequence of $K^{(l)}$ converges to some matrix.
  \item When $\sigma_w^2 > 2/\lambda^2$, let $c_l = (\sigma_w^2\lambda^2/2)^l$; then, $K^{(l)}/c_l \to vv^T$ where $v$ is an eigenvector corresponding to $\lambda$.
  \end{enumerate}
\end{theorem}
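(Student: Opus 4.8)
The three parts rest on two facts. First, the arc-cosine identity: if $z\sim\tN(0,K^{(l)})$ then $C^{(l)}(x,x)=\tfrac12K^{(l)}(x,x)$, and, writing $\Theta^{(l)}$ for the correlation matrix of $K^{(l)}$, the correlation matrix of $C^{(l)}$ is $F(\Theta^{(l)})$ applied entrywise, where $F(\rho)=\tfrac1\pi\bigl(\sqrt{1-\rho^2}+(\pi-\arccos\rho)\rho\bigr)$. I will use that $F$ is nondecreasing on $[-1,1]$ with $F([-1,1])\subseteq[0,1]$, $F(1)=1$, $F$ is $1$-Lipschitz ($F'(\rho)=(\pi-\arccos\rho)/\pi\in[0,1]$), and $F(\rho)>\rho$ for every $\rho\in[-1,1)$; the last is equivalent to $\sin\theta-\theta\cos\theta>0$ on $(0,\pi]$, which holds since the left side vanishes at $0$ and has derivative $\theta\sin\theta\ge0$. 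Second, Perron--Frobenius: for symmetric, non-negative, irreducible $A$ one has $\|A\|_2=\lambda$ with a positive eigenvector $v$; aperiodicity rules out $-\lambda$ being an eigenvalue, so $A/\lambda$ acts as the identity on $\mathrm{span}(v)$ and as a strict contraction on $v^{\perp}$, and $(A/\lambda)^l$ converges to the orthogonal projector $P_v$ onto $\mathrm{span}(v)$.

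For Part~1 I would first prove a correlation-monotonicity lemma: if $M\succeq0$ has all pairwise correlations $\ge r\ge0$ and $A\ge0$, then $AMA^T$ has all pairwise correlations $\ge r$. Writing $M=GG^T$ and $G_u=\sqrt{M_{uu}}\,\hat G_u$ (so $\langle\hat G_u,\hat G_w\rangle\ge r\ge0$), the $i$-th row of $AG$ is the non-negative combination $p_i=\sum_u a_{iu}\hat G_u$ with $a_{iu}=A_{iu}\sqrt{M_{uu}}\ge0$, and $\langle p_i,p_j\rangle=\sum_{u,w}a_{iu}a_{jw}\langle\hat G_u,\hat G_w\rangle\ge r\,\alpha_i\alpha_j\ge r\,\|p_i\|\,\|p_j\|$ with $\alpha_i=\sum_u a_{iu}\ge\|p_i\|$. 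When $\sigma_b^2=0$ the recursion is $K^{(l+1)}=\sigma_w^2AC^{(l)}A^T$, and for $l\ge1$ the correlation matrix of $C^{(l)}$ is $F(\Theta^{(l)})$ with entries $\ge F(\rho_{\min}(K^{(l)}))\ge0$; the lemma then yields $\rho_{\min}(K^{(l+1)})\ge F(\rho_{\min}(K^{(l)}))\ge\rho_{\min}(K^{(l)})$. Hence $\rho_{\min}(K^{(l)})$ is nondecreasing, bounded by $1$, and converges to a fixed point of the continuous map $F$, i.e.\ to $1$, with strict increase until $1$ since $F(\rho)>\rho$ on $[-1,1)$.

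For Part~2 I would control the trace. Since $\trace{C^{(l)}}=\tfrac12\trace{K^{(l)}}$ and $\trace{AC^{(l)}A^T}=\trace{A^2C^{(l)}}\le\|A^2\|_2\trace{C^{(l)}}=\lambda^2\cdot\tfrac12\trace{K^{(l)}}$ (using $\lambda^2I-A^2\succeq0$ and $C^{(l)}\succeq0$), we get $\trace{K^{(l+1)}}\le\sigma_b^2N+\tfrac{\sigma_w^2\lambda^2}{2}\trace{K^{(l)}}$; since $\tfrac{\sigma_w^2\lambda^2}{2}<1$ this forces $\sup_l\trace{K^{(l)}}<\infty$. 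The $K^{(l)}$ therefore lie in a bounded subset of the PSD cone, which is compact, so a subsequence converges.

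For Part~3 I would first extend Part~1 to $\rho_{\min}(K^{(l)})\to1$ with no restriction on $\sigma_b$: the added term $\sigma_b^2\ones_{N\times N}$ is rank one with all correlations equal to $1$, and adding $\epsilon\ones_{N\times N}$ ($\epsilon\ge0$) cannot decrease any entrywise correlation (the check reduces to $c+\epsilon\le\sqrt{(a+\epsilon)(b+\epsilon)}$ whenever $c\le\sqrt{ab}$), so again $\rho_{\min}(K^{(l+1)})\ge F(\rho_{\min}(K^{(l)}))$ and $\rho_{\min}(K^{(l)})\nearrow1$. Consequently $K^{(l)}$ becomes asymptotically rank one: $K^{(l)}=d^{(l)}(d^{(l)})^T+E^{(l)}$ with $d^{(l)}_x=\sqrt{K^{(l)}(x,x)}$ and $\|E^{(l)}\|$ small relative to $\|d^{(l)}\|^2$, the bound driven by $1-\rho_{\min}(K^{(l)})$. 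Substituting into the recursion and using $C^{(l)}=\tfrac12d^{(l)}(d^{(l)})^T+O(\|E^{(l)}\|)$ together with $(AC^{(l)}A^T)_{xx}=\tfrac12(Ad^{(l)})_x^2+\dots$, one finds, after dividing by $c_l$ and using $c_l/c_{l-1}=\sigma_w^2\lambda^2/2$, that $e^{(l)}:=d^{(l)}/\sqrt{c_l}$ obeys $e^{(l)}=(A/\lambda)e^{(l-1)}+\eta^{(l)}$ with $\|\eta^{(l)}\|$ driven to $0$ by $1-\rho_{\min}(K^{(l-1)})$ and $\sigma_b^2/c_l$. Decomposing $e^{(l)}=P_ve^{(l)}+(I-P_v)e^{(l)}$: on $\mathrm{span}(v)$ the map is the identity, so $P_ve^{(l)}=P_ve^{(1)}+\sum_kP_v\eta^{(k)}$ converges; on $v^{\perp}$ it is a strict contraction, so $(I-P_v)e^{(l)}\to0$; hence $e^{(l)}\to\beta v$ for some $\beta>0$ (positivity from $\langle e^{(1)},v\rangle>0$, both vectors being positive, together with summability of the errors). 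Therefore $K^{(l)}/c_l=e^{(l)}(e^{(l)})^T+E^{(l)}/c_l\to\beta^2vv^T$, and $\beta v$ is an eigenvector for $\lambda$, as claimed. The hard part is exactly the error bookkeeping here: because $F'(1)=1$, the convergence $\rho_{\min}(K^{(l)})\to1$ is only polynomial (I expect $1-\rho_{\min}(K^{(l)})=O(l^{-2})$, so that $\sum_l\|\eta^{(l)}\|<\infty$), and I must establish a rate strong enough to guarantee that the perturbations $\eta^{(l)}$, propagated through the merely non-expansive operator $A/\lambda$, neither accumulate in the $v$-direction nor re-excite $v^{\perp}$, while also keeping $(A/\lambda)e^{(l-1)}$ bounded away from $0$ so the square-root step is stable — a mild bootstrap. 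Parts~1 and~2 are comparatively routine once the arc-cosine identity and the monotonicity lemma are in place.
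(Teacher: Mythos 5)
Parts~1 and~2 of your proposal are correct and essentially the paper's own argument: your Gram-vector ``correlation-monotonicity lemma'' is exactly the paper's direct estimate showing $\rho_{\min}(K^{(l+1)})\ge f(\rho_{\min}(K^{(l)}))$ (using non-negativity of $A$ and $f\ge 0$), followed by monotone convergence to the unique fixed point $\rho=1$; and your trace bound $\trace{A C^{(l)} A^T}\le \lambda^2\,\trace{C^{(l)}}=\tfrac{\lambda^2}{2}\trace{K^{(l)}}$ (via $\lambda^2 I-A^2\succeq 0$) is the same contraction the paper gets from von Neumann's inequality, giving boundedness and hence a convergent subsequence.

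Part~3 has a genuine gap, in two places. First, your bridge from Part~1 to the biased case is based on the claim that adding $\epsilon\ones_{N\times N}$ cannot decrease any entrywise correlation; this is false. Take the $2\times2$ principal block with $a=M_{xx}=1$, $b=M_{x'x'}=100$, $c=M_{xx'}=10$ (correlation $1$) and $\epsilon=1$: the new correlation is $11/\sqrt{202}\approx 0.77<1$. The inequality you verify, $c+\epsilon\le\sqrt{(a+\epsilon)(b+\epsilon)}$, only says the perturbed correlation is still at most $1$, not that it dominates the old one. Hence the monotone inequality $\rho_{\min}(K^{(l+1)})\ge F(\rho_{\min}(K^{(l)}))$ is not established when $\sigma_b^2>0$, and with it goes the $O(l^{-2})$ rate your perturbed power iteration needs. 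Second, even granting that rate, you explicitly defer the decisive bookkeeping: summability of $\|\eta^{(l)}\|$ in $e^{(l)}=(A/\lambda)e^{(l-1)}+\eta^{(l)}$, stability of the square-root step, and strict positivity of the limit; as written, Part~3 is a program, not a proof. Note that the paper's proof of Part~3 sidesteps all of this: it normalizes $\kappa^{(l)}=K^{(l)}/c_l$, shows $v^T\kappa^{(l)}v$ is nondecreasing (using $g(B)$ entrywise $\ge\tfrac12 B$ and $v\succ0$) and bounded via the trace recursion, and shows $\trace{P\kappa^{(l)}P^T}\to0$ because $\norm{PA}{2}=\lambda'<\lambda$ by aperiodicity, with the bias entering only through the summable terms $N\sigma_b^2/c_{l+1}$; no rate for $\rho_{\min}$ and no monotonicity of correlations under the bias term are needed. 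If you want to keep your route, you must replace the false monotonicity step by a quantitative argument that $\sigma_b^2\ones_{N\times N}$ is a relatively vanishing perturbation of the geometrically growing $\sigma_w^2 A C^{(l)} A^T$ (which itself requires a lower bound on the growth of $\mathrm{diag}(K^{(l)})$), i.e., exactly the kind of estimate you postponed.
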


A few remarks follow. The first case implies that the correlation matrix converges monotonously to a matrix of all ones. As a consequence, up to some scaling $c'_l$ that may depend on $l$, the scaled covariance matrix $K^{(l)}/c'_l$ converges to a rank-1 matrix. The third case shares a similar result, with the limit explicitly spelled out, but note that the eigenvector $v$ may not be normalized. The second case is challenging to analyze. According to empirical verification, we speculate a stronger result---convergence of $K^{(l)}$ to a unique fixed point---may hold.

\section{Scalable Computation through Low-Rank Approximation}
The computation of the covariance matrix $K^{(L)}$ through recursion~\eqref{eqn:recursion-gcn}--\eqref{eqn:recursion-gcn.K} is the main computational bottleneck for GP posterior inference. We start the exposition with the mean prediction. We compute the posterior mean $\widehat{y}_* = K_{*b}^{(L)}(K_{bb}^{(L)}+\epsilon I)^{-1}y_b$, where the subscripts $b$ and $*$ denote the training set and the prediction set, respectively; and $\epsilon$, called the \emph{nugget}, is the noise variance of the training data. Let there be $N_b$ training nodes and $N_*$ prediction nodes. It is tempting to compute only the $(N_b+N_*)\times N_b$ submatrix of $K^{(L)}$ for the task, but the recursion~\eqref{eqn:recursion-gcn.K} requires the full $C^{(L-1)}$ at the presence of $A$, and hence all the full $C^{(l)}$'s and $K^{(l)}$'s.

To reduce the computational costs, we resort to a low-rank approximation of $C^{(l)}$, from which we easily see that $K^{(l+1)}$ is also low-rank. Before deriving the approximation recursion, we note (again) that for the ReLU activation $\phi$, $C^{(l)}$ in~\eqref{eqn:recursion-gcn} is the arc-cosine kernel with a closed-form expression:
\begin{equation}\label{eqn:C}
C_{xx'}^{(l)} = \frac{1}{2\pi} \sqrt{K_{xx}^{(l)}K_{x'x'}^{(l)}}
\left( \sin\theta_{xx'}^{(l)} + (\pi-\theta_{xx'}^{(l)})\cos\theta_{xx'}^{(l)} \right)
\quad\text{where}\quad
\theta_{xx'}^{(l)} = \arccos\left( \frac{ K_{xx'}^{(l)} }{ \sqrt{ K_{xx}^{(l)} K_{x'x'}^{(l)} } } \right).
\end{equation}
Hence, the main idea is that starting with a low-rank approximation of $K^{(l)}$, compute an approximation of $C^{(l)}$ by using~\eqref{eqn:C}, and then obtain an approximation of $K^{(l+1)}$ based on~\eqref{eqn:recursion-gcn.K}; then, repeat.

To derive the approximation, we use the subscript $a$ to denote a set of landmark nodes with cardinality $N_a$. The Nystr\"{o}m approximation~\citep{Drineas2005} of $K^{(0)}$ is $K_{: a}^{(0)}(K_{aa}^{(0)})^{-1}K_{a :}^{(0)}$, where the subscript $:$ denotes retaining all rows/columns. We rewrite this approximation in the Cholesky style as $Q^{(0)}{Q^{(0)}}^T$, where $Q^{(0)}=K_{: a}^{(0)}(K_{aa}^{(0)})^{-\frac{1}{2}}$ has size $N\times N_a$. Proceed with induction. Let $K^{(l)}$ be approximated by $\widehat{K}^{(l)}=Q^{(l)}{Q^{(l)}}^T$, where $Q^{(l)}$ has size $N\times (N_a+1)$. We apply~\eqref{eqn:C} to compute an approximation to $C_{: a}^{(l)}$, namely $\widehat{C}_{: a}^{(l)}$, by using $\widehat{K}_{: a}^{(l)}$. Then, \eqref{eqn:recursion-gcn.K} leads to a Cholesky style approximation of $K^{(l+1)}$:
\[
\widehat{K}^{(l+1)} = \sigma_b^2 \ones_{N\times N} + \sigma_w^2 A\widehat{C}^{(l)}A^T \equiv Q^{(l+1)}{Q^{(l+1)}}^T,
\]
where $Q^{(l+1)} = \begin{bmatrix}
  \sigma_w A \widehat{C}_{: a}^{(l)} (\widehat{C}_{aa}^{(l)})^{-\frac{1}{2}} &
  \sigma_b \ones_{N\times 1}
\end{bmatrix}$. Clearly, $Q^{(l+1)}$ has size $N\times(N_a+1)$, completing the induction.

In summary, $K^{(L)}$ is approximated by a rank-$(N_a+1)$ matrix $\widehat{K}^{(L)} = Q^{(L)}{Q^{(L)}}^T$. The computation of $Q^{(L)}$ is summarized in Algorithm~\ref{algo:alg-Nystrom}. Once it is formed, the posterior mean is computed as
\begin{equation}\label{eqn:y}
\widehat{y}_*
\approx \widehat{K}_{*b}^{(L)}(\widehat{K}_{bb}^{(L)}+\epsilon I)^{-1}y_b
= Q_{*:}^{(L)} \left( {Q_{b:}^{(L)}}^T Q_{b:}^{(L)} + \epsilon I \right)^{-1} {Q_{b:}^{(L)}}^T y_b,
\end{equation}
where note that the matrix to be inverted has size $(N_a+1)\times (N_a+1)$, which is assumed to be significantly smaller than $N_b\times N_b$. Similarly, the posterior variance is
\begin{equation}\label{eqn:V}
\widehat{K}_{**}^{(L)} - \widehat{K}_{*b}^{(L)}(\widehat{K}_{bb}^{(L)}+\epsilon I)^{-1}\widehat{K}_{b*}^{(L)}
= \epsilon Q_{*:}^{(L)} \left( {Q_{b:}^{(L)}}^T Q_{b:}^{(L)} + \epsilon I \right)^{-1} {Q_{*:}^{(L)}}^T.
\end{equation}
The computational costs of $Q^{(L)}$ and the posterior inference~\eqref{eqn:y}--\eqref{eqn:V} are summarized in Table~\ref{tab:costs}

\begin{algorithm}[t]
  \caption{Computing $K^{(L)} \approx \widehat{K}^{(L)} = Q^{(L)}{Q^{(L)}}^T$}
  \label{algo:alg-Nystrom}
  \begin{algorithmic}[1]
    \Require $Q^{(0)}$ such that $K^{(0)} \approx Q^{(0)}{Q^{(0)}}^T$
    \For{$l=0,\ldots,L-1$}
    \State Compute $\widehat{K}_{: a}^{(l)} = Q_{: :}^{(0)} {Q_{a :}^{(0)}}^T$
    \State Compute $\widehat{C}_{: a}^{(l)}$ by~\eqref{eqn:C}, where $C^{(l)}$ (resp.\ $K^{(l)}$) entries are replaced by $\widehat{C}^{(l)}$ (resp.\ $\widehat{K}^{(l)}$) entries
    \State Compute $Q^{(l+1)} = \begin{bmatrix}
      \sigma_w A \widehat{C}_{: a}^{(l)} (\widehat{C}_{aa}^{(l)})^{-\frac{1}{2}} &
      \sigma_b \ones_{N\times 1}
    \end{bmatrix}$
    \EndFor
  \end{algorithmic}
\end{algorithm}

\begin{table}[t]
  \caption{Computational costs. $M$: number of edges; $N$: number of nodes; $N_b$: number of training nodes; $N_*$: number of prediction nodes; $N_a$: number of landmark nodes; $L$: number of layers. Assume $N_b \gg N_a$. For posterior variance, assume only the diagonal is needed.}
  \label{tab:costs}
  \begin{center}
  \begin{tabular}{ccc}
    \hline
    & Time $O(\cdot)$ & Storage $O(\cdot)$ \\
    \hline
    Computation of $Q^{(L)}$
    & $LMN_a+LNN_a^2+LN_a^3$
    & $NN_a$ \\
    Posterior mean~\eqref{eqn:y}
    & $N_*N_a+N_bN_a^2+N_a^3$
    & $(N_b+N_*)N_a$ \\
    Posterior variance~\eqref{eqn:V}
    & $N_*N_a+N_bN_a^2+N_a^3$
    & $(N_b+N_*)N_a$ \\
    \hline
  \end{tabular}
  \end{center}
  \vskip -10pt
\end{table}

\section{Composing Graph Neural Network-Inspired Kernels}\label{sec:compose}
Theorem~\ref{thm:gcn-gp}, together with its proof, suggests that the covariance matrix of the limiting GP can be computed in a composable manner. Moreover, the derivation of Algorithm~\ref{algo:alg-Nystrom} indicates that the low-rank approximation of the covariance matrix can be similarly composed. Altogether, such a nice property allows one to easily derive the corresponding covariance matrix and its approximation for a new GNN architecture, like writing a program and obtaining a transformation of it automatically through operator overloading~\citep{Novak2020}: the covariance matrix is a transformation of the GNN and the composition of the former is in exactly the same manner and order as that of the latter. We call the covariance matrices \emph{programmable}.

For example, we write a GCN layer as $X \gets A\phi(X)W+b$, where for notational simplicity, $X$ denotes pre-activation rather than post-activation as in the earlier sections. The activation $\phi$ on $X$ results in a transformation of the kernel matrix $K$ into $g(K)$, defined as:
\begin{equation}\label{eqn:g}
g(K) := C = \mean_{z \sim \tN(0,K)} [ \phi(z)\phi(z)^T ],
\end{equation}
due to~\eqref{eqn:recursion-gcn}. Moreover, if $K$ admits a low-rank approximation $QQ^T$, then $g(K)$ admits a low-rank approximation $PP^T$ where $P=\chol(g(K))$ with
\[
\chol(C) := C_{: a}C_{a a}^{-\frac{1}{2}}.
\]
The next operation---graph convolution---multiplies $A$ to the left of the post-activation. Correspondingly, the covariance matrix $K$ is transformed to $AKA^T$ and the low-rank approximation factor $Q$ is transformed to $AQ$. Then, the operation---multiplying the weight matrix $W$ to the right---will transform $K$ to $\sigma_w^2K$ and $Q$ to $\sigma_wQ$. Finally, adding the bias $b$ will transform $K$ to $K+\sigma_b^2\ones_{N\times N}$ and $Q$ to $\begin{bmatrix} Q & \sigma_b\ones_{N\times 1} \end{bmatrix}$. Altogether, we have obtained the following updates per layer:
\begin{alignat*}{2}
  \text{GCN}: &\quad& X &\gets A \phi(X) W + b \\
  &\quad& K &\gets \sigma_w^2 A g(K) A^T + \sigma_b^2 \ones_{N\times N} \\
  &\quad& Q &\gets \begin{bmatrix} \sigma_w A \chol(g(QQ^T)) & \sigma_b \ones_{N\times 1} \end{bmatrix}.
\end{alignat*}
One may verify the $K$ update against~\eqref{eqn:recursion-gcn}--\eqref{eqn:recursion-gcn.K} and the $Q$ update against Algorithm~\ref{algo:alg-Nystrom}. Both updates can be automatically derived based on the update of $X$.

\begin{table}[t]
  \caption{Neural network building blocks, kernel operations, and the low-rank counterpart.}
  \label{tab:kernel-operations}
  \begin{center}
  \begin{tabular}{llll}
    \hline
    Building block & Neural network & Kernel operation & Low-rank operation \\
    \hline
    Input & $X \gets X^{(0)}$ & $K \gets C^{(0)}$ & $Q \gets \chol(C^{(0)})$\\
    Bias term & $X \gets X + b$ & $K \gets K + \sigma_b^2\ones_{N\times N}$ & $Q \gets \begin{bmatrix} Q & \sigma_b\ones_{N\times 1} \end{bmatrix}$\\
    Weight term & $X \gets XW$ & $K \gets \sigma_w^2K$ & $Q \gets \sigma_wQ$\\
    Mixed weight term & $X \gets X(\alpha I+\beta W)$ & $K \gets (\alpha^2+\beta^2\sigma_w^2) K$ & $Q \gets \sqrt{\alpha^2+\beta^2\sigma_w^2} Q$ \\
    Graph convolution & $X \gets AX$ & $K \gets AKA^T$ & $Q \gets AQ$\\
    Activation & $X \gets \phi(X)$ & $K \gets g(K)$ & $Q \gets \chol(g(QQ^T))$\\
    Independent addition & $X \gets X_1+X_2$ & $K \gets K_1+K_2$ & $Q \gets \begin{bmatrix} Q_1 & Q_2 \end{bmatrix}$\\
    \hline
  \end{tabular}
  \end{center}
  \vskip -10pt
\end{table}

We summarize the building blocks of a GNN and the corresponding kernel/low-rank operations in Table~\ref{tab:kernel-operations}. The independent-addition building block is applicable to skip/residual connections. For example, here is the composition for the GCNII layer~\citep{Chen2020} without a bias term, where a skip connection with $X^{(0)}$ occurs:
\begin{alignat*}{2}
  \text{GCNII}: &\quad& X &\gets \left( (1-\alpha) A \phi(X) + \alpha X^{(0)}  \right) ((1-\beta)I + \beta W) \\
  &\quad& K &\gets \left( (1-\alpha)^2 A g(K) A^T + \alpha^2 K^{(0)} \right) ((1-\beta)^2 + \beta^2\sigma_w^2) \\
  &\quad& Q &\gets \begin{bmatrix} (1-\alpha) A \chol(g(QQ^T)) & \alpha Q^{(0)} \end{bmatrix} \sqrt{(1-\beta)^2 + \beta^2\sigma_w^2}.
\end{alignat*}
For another example of the composability, we consider the popular GIN layer~\citep{Xu2019}, which we assume uses a 2-layer MLP after the neighborhood aggregation:
\begin{alignat*}{2}
  \text{GIN}: &\quad& X &\gets \phi(A\phi(X)W+b)W'+b'\\
  &\quad& K &\gets \sigma_w^2 g(B) + \sigma_b^2\ones_{N\times N} \quad\text{where}\quad B = \sigma_w^2 A g(K) A^T + \sigma_b^2 \ones_{N\times N} \\
  &\quad& Q &\gets \begin{bmatrix} \sigma_w\chol(g(PP^T)) & \sigma_b\ones_{N\times 1}\end{bmatrix} \quad\text{where}\quad P = \begin{bmatrix} \sigma_w A \chol(g(QQ^T)) & \sigma_b\ones_{N\times 1}\end{bmatrix}.
\end{alignat*}
Additionally, the updates for a GraphSAGE layer~\citep{Hamilton2017} are given in Appendix~\ref{sec:sage.update}.

\section{Experiments}
In this section, we conduct a comprehensive set of experiments to evaluate the performance of the GP kernels derived by taking limits on the layer width of GCN and other GNNs. We demonstrate that these GPs are comparable with GNNs in prediction performance, while being significantly faster to compute. We also show that the low-rank version scales favorably, suitable for practical use.

\textbf{Datasets.} The experiments are conducted on several benchmark datasets of varying sizes, covering both classification and regression. They include predicting the topic of scientific papers organized in a citation network (Cora, Citeseer, PubMed, and ArXiv); predicting the community of online posts based on user comments (Reddit), and predicting the average daily traffic of Wikipedia pages using hyperlinks among them (Chameleon, Squirrel, and Crocodile). Details of the datasets (including sources and preprocessing) are given in Appendix~\ref{sec:exp.detail}.

\textbf{Experiment environment, training details, and hyperparameters} are given in Appendix~\ref{sec:exp.detail}.

\textbf{Prediction Performance: GCN-based comparison.} We first conduct the semi-supervised learning tasks on all datasets by using GCN and GPs with different kernels. These kernels include the one equivalent to the limiting GCN (GCNGP), a usual squared-exponential kernel (RBF), and the GGP kernel proposed by~\citet{Ng2018}.%
\footnote{We apply only the kernel but not the likelihood nor the variational inference used in~\citet{Ng2018}, for reasons given in Appendix~\ref{sec:exp.detail}.}
Each of these kernels has a low-rank version (suffixed with -X). RBF-X and GGP-X%
\footnote{GGP-X in our notation is the Nystr\"{o}m approximation of the GGP kernel, different from a technique under the same name in~\citet{Ng2018}, which uses additionally the validation set to compute the prediction loss.}
use the Nystr\"{o}m approximation, consistent with GCNGP-X.

GPs are by nature suitable for regression. For classification tasks, we use the one-hot representation of labels to set up a multi-output regression. Then, we take the coordinate with the largest output as the class prediction. Such an ad hoc treatment is widely used in the literature, as other more principled approaches (such as using the Laplace approximation on the non-Gaussian posterior) are too time-consuming for large datasets, meanwhile producing no noticeable gain in accuracy.

Table~\ref{tab:GCN.accuracy} summarizes the accuracy for classification and the coefficient of determination, $R^2$, for regression. Whenever randomness is involved, the performance is reported as an average over five runs. The results of the two tasks show different patterns. For classification, GCNGP(-X) is sligtly better than GCN and GGP(-X), while RBF(-X) is significantly worse than all others; moreover, the low-rank version is outperformed by using the full kernel matrix. On the other hand, for regression, GCNGP(-X) significantly outperforms GCN, RBF(-X), and GGP(-X); and the low-rank version becomes better. The less competitive performance of RBF(-X) is expected, as it does not leverage the graph inductive bias. It is attractive that GCNGP(-X) is competitive with GCN.

\addtolength{\tabcolsep}{-1pt}
\begin{table}[t]
  \caption{Performance of GCNGP, in comparison with GCN and typical GP kernels. The Micro-F1 score is reported for classification tasks and $R^2$ is reported for regression tasks.}
  \label{tab:GCN.accuracy}
  \begin{center}
  \begin{tabular}{cccccccc}
    \hline
    & GCN   & GCNGP & GCNGP-X & RBF & RBF-X & GGP & GGP-X \\
    \hline
    Cora     & 0.8183\se{0.0055} & \best{0.8280} & 0.7980 & 0.5860 & 0.5850 & 0.7850 & 0.7410 \\
    Citeseer & 0.6941\se{0.0079} & \best{0.7090} & 0.7080 & 0.6120 & 0.6090 & 0.7060 & 0.6470 \\
    PubMed   & 0.7649\se{0.0058} & \best{0.7960} & 0.7810 & 0.7360 & 0.7340 & 0.7820 & 0.7380 \\
    ArXiv    & 0.6990\se{0.0014} & OOM & \best{0.7011\se{0.0011}} & OOM & 0.5382 & OOM & 0.6527 \\
    Reddit   & 0.9330\se{0.0006} & OOM & \best{0.9465\se{0.0003}} & OOM & 0.5920 & OOM & 0.9058 \\
    \hline
    Chameleon & 0.5690\se{0.0376} & 0.6720 & \best{0.6852} & 0.5554 & 0.5613 & 0.5280 & 0.5311 \\
    Squirrel & 0.4243\se{0.0393} & 0.4926 & \best{0.4998} & 0.3187 & 0.3185 & 0.2440 & 0.2251 \\
    Crocodile & 0.6976\se{0.0323} & 0.8002 & \best{0.8013} & 0.6643 & 0.6710 & 0.6952 & 0.6810 \\
    \hline
  \end{tabular}
  \end{center}
  \vskip -10pt
\end{table}
\addtolength{\tabcolsep}{1pt}

\textbf{Prediction Performance: Comparison with other GNNs.} In addition to GCN, we conduct experiments with several popularly used GNN architectures (GCNII, GIN, and GraphSAGE) and GPs with the corresponding kernels. We test with the three largest datasets: PubMed, ArXiv, and Reddit, for the latter two of which a low-rank version of the GPs is used for computational feasibility.

Table~\ref{tab:GNN.accuracy} summarizes the results. The observations on other GNNs extend similarly those on the GCN. In particular, on PubMed the GPs noticeably improve over the corresponding GNNs, while on ArXiv and Reddit the two families perform rather similarly. An exception is GIN for ArXiv, which significantly underperforms the GP counterpart, as well as other GNNs. It may improve with an extensive hyperparameter tuning.

\begin{table}[t]
  \caption{Performance comparison (Micro-F1) between GNNs and the corresponding GP kernels.}
  \label{tab:GNN.accuracy}
  \begin{center}
    \begin{adjustbox}{width=0.99\textwidth}
      \begin{tabular}{c|cc|cc|cc}
      \hline
      & \multicolumn{2}{c|}{PubMed} & \multicolumn{2}{c|}{ArXiv} & \multicolumn{2}{c}{Reddit} \\
      architecture & GNN   & GNNGP & GNN   & GNNGP-X & GNN   & GNNGP-X \\
      \hline
      GCN & 0.7649\se{0.0058} & 0.7960 & 0.6989\se{0.0016} & 0.7011\se{0.0011} & 0.9330\se{0.0006} & 0.9465\se{0.0003} \\
      GCNII & 0.7558\se{0.0096} & 0.7840 & 0.7008\se{0.0021} & 0.6955\se{0.0011} & 0.9482\se{0.0007} & 0.9500\se{0.0003} \\
      GIN & 0.7406\se{0.0112} & 0.7690 & 0.6340\se{0.0056} & 0.6652\se{0.0012} & 0.9398\se{0.0016} & 0.9428\se{0.0005} \\
      GraphSAGE & 0.7535\se{0.0047} & 0.7900 & 0.6984\se{0.0021} & 0.6962\se{0.0007} & 0.9628\se{0.0007} & 0.9539\se{0.0003} \\
      \hline
      \end{tabular}
      \end{adjustbox}
  \end{center}
  \vskip -10pt
\end{table}

\textbf{Running time.} We compare the running time of the methods covered by Table~\ref{tab:GCN.accuracy}. Different from usual neural networks, the training and inference of GNNs do not decouple in full-batch training. Moreover, there is not a universally agreed split between the training and the inference steps in GPs. Hence, we compare the total time for each method.

Figure~\ref{fig:GCN.time} plots the timing results, normalized against the GCN time for ease of comparison. It suggests that GCNGP(-X) is generally faster than GCN. Note that the vertical axis is in the logarithmic scale. Hence, for some of the datasets, the speedup is even one to two orders of magnitude.

\begin{figure}[t]
  \begin{minipage}[t]{0.49\linewidth}
    \centering
    \includegraphics[width=\linewidth]{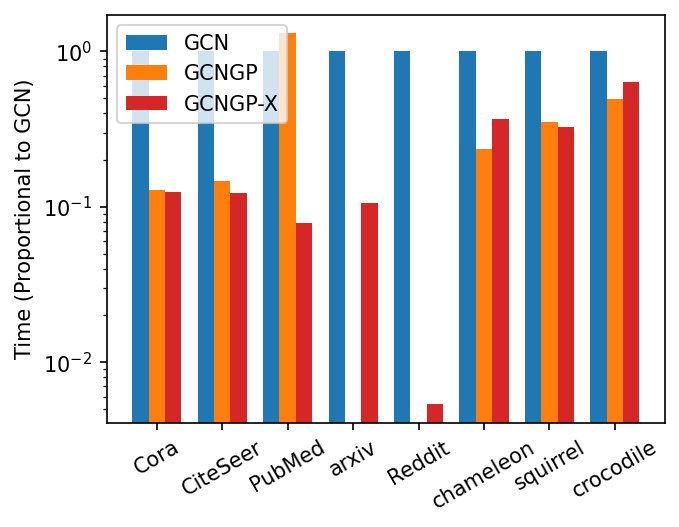}
    \vskip -10pt
    \caption{Timing comparison. For each dataset, the times are normalized against that of GCN.}
    \label{fig:GCN.time}
  \end{minipage}\hfill
  \begin{minipage}[t]{0.49\linewidth}
    \centering
    \includegraphics[width=\linewidth]{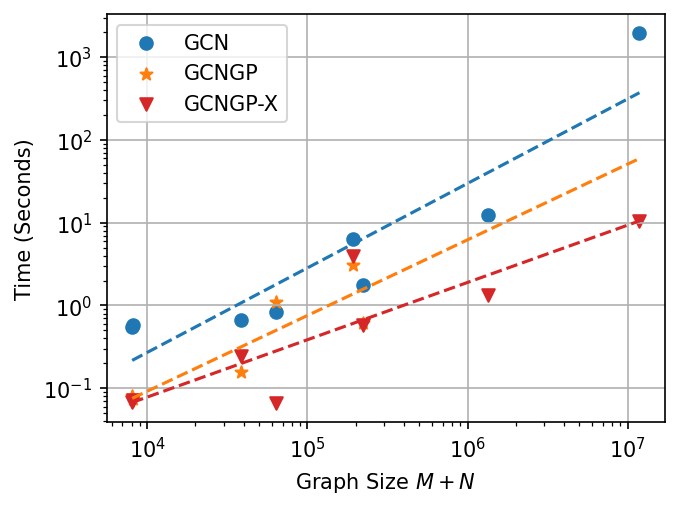}
    \vskip -10pt
    \caption{Scaling of the running time with respect to the graph size, $M+N$.}
    \label{fig:scalability}
  \end{minipage}
\end{figure}

\textbf{Scalability.} For graphs, especially under the semi-supervised learning setting, the computational cost of a GP is much more complex than that of a usual one (which can be simply described as ``cubic in the training set size''). One sees in Table~\ref{tab:costs} the many factors that determine the cost of our graph-based low-rank kernels. To explore the practicality of the proposed method, we use the timings gathered for Figure~\ref{fig:GCN.time} to obtain an empirical scaling with respect to the graph size, $M+N$.

Figure~\ref{fig:scalability} fits the running times, plotted in the log-log scale, by using a straight line. We see that for neither GCN nor GCNGP(-X), the actual running time closely follows a polynomial complexity. However, interestingly, the least-squares fittings all lead to a slope approximately 1, which agrees with a linear cost. Theoretically, only GCNGP-X and GCN are approximately linear with respect to $M+N$, while GCNGP is cubic.

\textbf{Analysis on the depth.} The performance of GCN deteriorates with more layers, known as the oversmoothing phenomenon. Adding residual/skip connections mitigates the problem, such as in GCNII. A natural question asks if the corresponding GP kernels behave similarly.

Figure~\ref{fig:depth} shows that the trends of GCN and GCNII are indeed as expected. Interestingly, their GP counterparts both remain stable for depth $L$ as large as 12. Our depth analysis (Theorem~\ref{thm:limit-gnngp}) suggests that in the limit, the GPs may perform less well because the kernel matrix may degenerate to rank 1. This empirical result indicates that the drop in performance may have not started yet.

\textbf{Analysis on the landmark set.} The number of landmark nodes, $N_a$, controls the approximation quality of the low-rank kernels and hence the prediciton accuracy. On the other hand, the computational costs summarized in Table~\ref{tab:costs} indicate a dependency on $N_a$ as high as the third power. It is crucial to develop an empirical understanding of the accuracy-time trade-off it incurs.

Figure~\ref{fig:landmark} clearly shows that as $N_a$ becomes larger, the running time increase is not linear, while the increase of accuracy diminishes as the landmark set approaches the training set. It is remarkable that using only 1/800 of the training set as landmark nodes already achieves an accuracy surpassing that of GCN, by using time that is only a tiny fraction of the time otherwise needed to gain an additional 1\% increase in the accuracy.

\begin{figure}[t]
  \begin{minipage}[t]{0.49\linewidth}
    \centering
    \includegraphics[width=\linewidth]{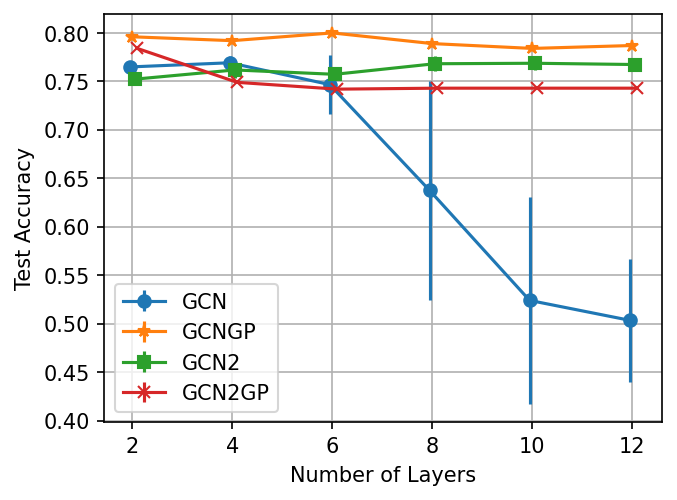}
    \vskip -10pt
    \caption{Performance of GNNs and GNNGPs as depth $L$ increases. Dataset: Pubmed.}
    \label{fig:depth}
  \end{minipage}\hfill
  \begin{minipage}[t]{0.49\linewidth}
    \centering
    \includegraphics[width=\linewidth]{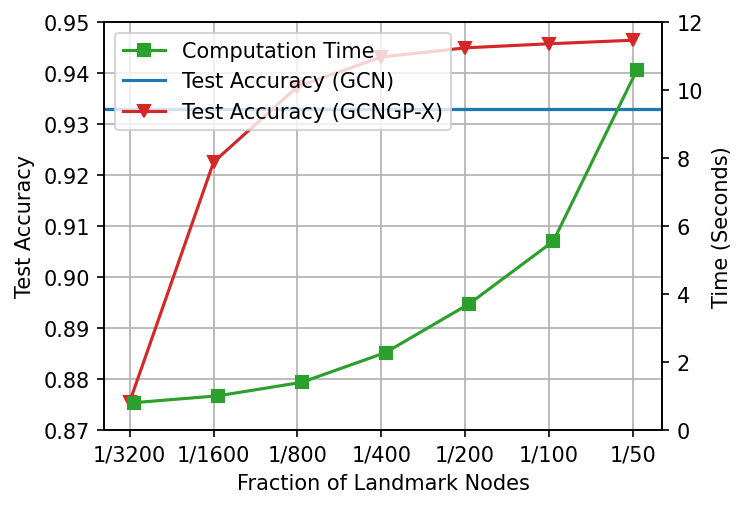}
    \vskip -10pt
    \caption{Performance of GCNGP-X as number of landmark nodes increases. Dataset: Reddit.}
    \label{fig:landmark}
  \end{minipage}
\end{figure}

\section{Conclusions}
We have presented a GP approach for semi-supervised learning on graph-structured data, where the covariance kernel incorporates a graph inductive bias by exploiting the relationship between a GP and a GNN with infinitely wide layers. Similar to other neural networks priorly investigated, one can work out the equivalent GP (in particular, the covariance kernel) for GCN; and inspired by this equivalence, we formulate a procedure to compose covariance kernels corresponding to many other members of the GNN family. Moreover, every building block in the procedure has a low-rank counterpart, which allows for building a low-rank approximation of the covariance matrix that facilitates scalable posterior inference. We demonstrate the effectiveness of the derived kernels used for semi-supervised learning and show their advantages in computation time over GNNs.

\section*{Acknowledgments}
Mihai Anitescu was supported by the U.S. Department of Energy, Office of Science, Office of Advanced Scientific Computing Research (ASCR) under Contract DE-AC02-06CH11347. Jie Chen acknowledges supports from the MIT-IBM Watson AI Lab.

\bibliography{reference}

\begin{thebibliography}{40}
\providecommand{\natexlab}[1]{#1}
\providecommand{\url}[1]{\texttt{#1}}
\expandafter\ifx\csname urlstyle\endcsname\relax
  \providecommand{\doi}[1]{doi: #1}\else
  \providecommand{\doi}{doi: \begingroup \urlstyle{rm}\Url}\fi

\bibitem[Aronszajn(1950)]{Aronszajn1950}
Nachman Aronszajn.
\newblock Theory of reproducing kernels.
\newblock \emph{Transactions of the American Mathematical Society}, 68\penalty0
  (3):\penalty0 337--404, 1950.

\bibitem[Chen \& Stein(2021)Chen and Stein]{Chen2021}
Jie Chen and Michael~L. Stein.
\newblock Linear-cost covariance functions for {G}aussian random fields.
\newblock \emph{Journal of the American Statistical Association}, 2021.

\bibitem[Chen et~al.(2017)Chen, Avron, and Sindhwani]{Chen2017}
Jie Chen, Haim Avron, and Vikas Sindhwani.
\newblock Hierarchically compositional kernels for scalable nonparametric
  learning.
\newblock \emph{Journal of Machine Learning Research}, 18\penalty0
  (66):\penalty0 1--42, 2017.

\bibitem[Chen et~al.(2020)Chen, Wei, Huang, Ding, and Li]{Chen2020}
Ming Chen, Zhewei Wei, Zengfeng Huang, Bolin Ding, and Yaliang Li.
\newblock Simple and deep graph convolutional networks.
\newblock In \emph{ICML}, 2020.

\bibitem[Cho \& Saul(2009)Cho and Saul]{Cho2009}
Youngmin Cho and Lawrence Saul.
\newblock Kernel methods for deep learning.
\newblock In \emph{NIPS}, 2009.

\bibitem[de~G.~Matthews et~al.(2018)de~G.~Matthews, Hron, Rowland, Turner, and
  Ghahramani]{G.Matthews2018}
Alexander~G. de~G.~Matthews, Jiri Hron, Mark Rowland, Richard~E. Turner, and
  Zoubin Ghahramani.
\newblock Gaussian process behaviour in wide deep neural networks.
\newblock In \emph{ICLR}, 2018.

\bibitem[Drineas \& Mahoney(2005)Drineas and Mahoney]{Drineas2005}
Petros Drineas and Michael~W. Mahoney.
\newblock On the nystrom method for approximating a gram matrix for improved
  kernel-based learning.
\newblock \emph{Journal of Machine Learning Research}, 6\penalty0
  (72):\penalty0 2153--2175, 2005.

\bibitem[Fey \& Lenssen(2019)Fey and Lenssen]{Fey2019}
Matthias Fey and Jan~E. Lenssen.
\newblock Fast graph representation learning with {PyTorch Geometric}.
\newblock In \emph{ICLR Workshop on Representation Learning on Graphs and
  Manifolds}, 2019.

\bibitem[Garriga-Alonso et~al.(2019)Garriga-Alonso, Rasmussen, and
  Aitchison]{GarrigaAlonso2019}
Adri\`{a} Garriga-Alonso, Carl~Edward Rasmussen, and Laurence Aitchison.
\newblock Deep convolutional networks as shallow {G}aussian processes.
\newblock In \emph{ICLR}, 2019.

\bibitem[Gasteiger et~al.(2019)Gasteiger, Bojchevski, and
  G\"{u}nnemann]{Gasteiger2019}
Johannes Gasteiger, Aleksandar Bojchevski, and Stephan G\"{u}nnemann.
\newblock Predict then propagate: Graph neural networks meet personalized
  {PageRank}.
\newblock In \emph{ICLR}, 2019.

\bibitem[Gilmer et~al.(2017)Gilmer, Schoenholz, Riley, Vinyals, and
  Dahl]{Gilmer2017}
Justin Gilmer, Samuel~S. Schoenholz, Patrick~F. Riley, Oriol Vinyals, and
  George~E. Dahl.
\newblock Neural message passing for quantum chemistry.
\newblock In \emph{ICML}, 2017.

\bibitem[Hamilton et~al.(2017)Hamilton, Ying, and Leskovec]{Hamilton2017}
William~L. Hamilton, Rex Ying, and Jure Leskovec.
\newblock Inductive representation learning on large graphs.
\newblock In \emph{NIPS}, 2017.

\bibitem[Hu et~al.(2020{\natexlab{a}})Hu, Shen, Yang, and Shao]{Hu2020a}
Jilin Hu, Jianbing Shen, Bin Yang, and Ling Shao.
\newblock Infinitely wide graph convolutional networks: Semi-supervised
  learning via {G}aussian processes.
\newblock Preprint arXiv:2002.12168, 2020{\natexlab{a}}.

\bibitem[Hu et~al.(2020{\natexlab{b}})Hu, Fey, Zitnik, Dong, Ren, Liu, Catasta,
  and Leskovec]{Hu2020}
Weihua Hu, Matthias Fey, Marinka Zitnik, Yuxiao Dong, Hongyu Ren, Bowen Liu,
  Michele Catasta, and Jure Leskovec.
\newblock Open graph benchmark: Datasets for machine learning on graphs.
\newblock In \emph{NeurIPS}, 2020{\natexlab{b}}.

\bibitem[Jacot et~al.(2018)Jacot, Gabriel, and Hongler]{Jacot2018}
Arthur Jacot, Franck Gabriel, and Cl\'{e}ment Hongler.
\newblock Neural tangent kernel: Convergence and generalization in neural
  networks.
\newblock In \emph{NeurIPS}, 2018.

\bibitem[Katzfuss(2017)]{Katzfuss2017}
Matthias Katzfuss.
\newblock A multi-resolution approximation for massive spatial datasets.
\newblock \emph{Journal of the American Statistical Association}, 112\penalty0
  (517):\penalty0 201--214, 2017.

\bibitem[Kipf \& Welling(2017)Kipf and Welling]{Kipf2017}
Thomas~N. Kipf and Max Welling.
\newblock Semi-supervised classification with graph convolutional networks.
\newblock In \emph{ICLR}, 2017.

\bibitem[Krizhevsky et~al.(2012)Krizhevsky, Sutskever, and
  Hinton]{Krizhevsky2012}
Alex Krizhevsky, Ilya Sutskever, and Geoffrey~E. Hinton.
\newblock {ImageNet} classification with deep convolutional neural networks.
\newblock In \emph{NIPS}, 2012.

\bibitem[Lee et~al.(2018)Lee, Bahri, Novak, Schoenholz, Pennington, and
  Sohl-Dickstein]{Lee2018}
Jaehoon Lee, Yasaman Bahri, Roman Novak, Samuel~S. Schoenholz, Jeffrey
  Pennington, and Jascha Sohl-Dickstein.
\newblock Deep neural networks as {G}aussian processes.
\newblock In \emph{ICLR}, 2018.

\bibitem[Lee et~al.(2019)Lee, Xiao, Schoenholz, Bahri, Novak, Sohl-Dickstein,
  and Pennington]{Lee2019}
Jaehoon Lee, Lechao Xiao, Samuel~S. Schoenholz, Yasaman Bahri, Roman Novak,
  Jascha Sohl-Dickstein, and Jeffrey Pennington.
\newblock Wide neural networks of any depth evolve as linear models under
  gradient descent.
\newblock In \emph{NeurIPS}, 2019.

\bibitem[Li et~al.(2018)Li, Han, and Wu]{Li2018}
Qimai Li, Zhichao Han, and Xiao-Ming Wu.
\newblock Deeper insights into graph convolutional networks for semi-supervised
  learning.
\newblock In \emph{AAAI}, 2018.

\bibitem[Neal(1994)]{Neal1994}
Radford~M. Neal.
\newblock Priors for infinite networks.
\newblock Technical Report CRG-TR-94-1, University of Toronto, 1994.

\bibitem[Ng et~al.(2018)Ng, Colombo, and Silva]{Ng2018}
Yin~Cheng Ng, Nicolo Colombo, and Ricardo Silva.
\newblock Bayesian semi-supervised learning with graph {G}aussian processes.
\newblock In \emph{NeurIPS}, 2018.

\bibitem[Novak et~al.(2019)Novak, Xiao, Lee, Bahri, Yang, Hron, Abolafia,
  Pennington, and Sohl-Dickstein]{Novak2019}
Roman Novak, Lechao Xiao, Jaehoon Lee, Yasaman Bahri, Greg Yang, Jiri Hron,
  Daniel~A. Abolafia, Jeffrey Pennington, and Jascha Sohl-Dickstein.
\newblock Bayesian deep convolutional networks with many channels are
  {G}aussian processes.
\newblock In \emph{ICLR}, 2019.

\bibitem[Novak et~al.(2020)Novak, Xiao, Hron, Lee, Alemi, Sohl-Dickstein, and
  Schoenholz]{Novak2020}
Roman Novak, Lechao Xiao, Jiri Hron, Jaehoon Lee, Alexander~A. Alemi, Jascha
  Sohl-Dickstein, and Samuel~S. Schoenholz.
\newblock Neural tangents: Fast and easy infinite neural networks in {P}ython.
\newblock In \emph{ICLR}, 2020.

\bibitem[Pei et~al.(2020)Pei, Wei, Chang, Lei, and Yang]{Pei2020}
Hongbin Pei, Bingzhe Wei, Kevin Chen-Chuan Chang, Yu~Lei, and Bo~Yang.
\newblock {Geom-GCN}: Geometric graph convolutional networks.
\newblock In \emph{ICLR}, 2020.

\bibitem[Rahimi \& Recht(2007)Rahimi and Recht]{Rahimi2007}
Ali Rahimi and Benjamin Recht.
\newblock Random features for large-scale kernel machines.
\newblock In \emph{NIPS}, 2007.

\bibitem[Rasmussen \& Williams(2006)Rasmussen and Williams]{Rasmussen2006}
Carl~Edward Rasmussen and Christopher K.~I. Williams.
\newblock \emph{Gaussian Processes for Machine Learning}.
\newblock The MIT Press, 2006.

\bibitem[Rozemberczki et~al.(2021)Rozemberczki, Allen, and
  Sarkar]{Rozemberczki2021}
Benedek Rozemberczki, Carl Allen, and Rik Sarkar.
\newblock Multi-scale attributed node embedding.
\newblock \emph{Journal of Complex Networks}, 9\penalty0 (2), 2021.

\bibitem[Topping et~al.(2022)Topping, Giovanni, Chamberlain, Dong, and
  Bronstein]{Topping2022}
Jake Topping, Francesco~Di Giovanni, Benjamin~Paul Chamberlain, Xiaowen Dong,
  and Michael~M. Bronstein.
\newblock Understanding over-squashing and bottlenecks on graphs via curvature.
\newblock In \emph{ICLR}, 2022.

\bibitem[Veli\v{c}kovi\'{c} et~al.(2018)Veli\v{c}kovi\'{c}, Cucurull, Casanova,
  Romero, Li\`{o}, and Bengio]{Velickovic2018}
Petar Veli\v{c}kovi\'{c}, Guillem Cucurull, Arantxa Casanova, Adriana Romero,
  Pietro Li\`{o}, and Yoshua Bengio.
\newblock Graph attention networks.
\newblock In \emph{ICLR}, 2018.

\bibitem[Williams(1996)]{Williams1996}
Christopher Williams.
\newblock Computing with infinite networks.
\newblock In \emph{NIPS}, 1996.

\bibitem[Wilson \& Nickisch(2015)Wilson and Nickisch]{Wilson2015}
Andrew~Gordon Wilson and Hannes Nickisch.
\newblock Kernel interpolation for scalable structured {G}aussian processes
  {(KISS-GP)}.
\newblock In \emph{ICML}, 2015.

\bibitem[Wortsman et~al.(2022)Wortsman, Ilharco, Gadre, Roelofs, Gontijo-Lopes,
  Morcos, Namkoong, Farhadi, Carmon, Kornblith, and Schmidt]{Wortsman2022}
Mitchell Wortsman, Gabriel Ilharco, Samir~Yitzhak Gadre, Rebecca Roelofs,
  Raphael Gontijo-Lopes, Ari~S. Morcos, Hongseok Namkoong, Ali Farhadi, Yair
  Carmon, Simon Kornblith, and Ludwig Schmidt.
\newblock Model soups: averaging weights of multiple fine-tuned models improves
  accuracy without increasing inference time.
\newblock In \emph{ICML}, 2022.

\bibitem[Wu et~al.(2021)Wu, Pan, Chen, Long, Zhang, and Yu]{Wu2021}
Zonghan Wu, Shirui Pan, Fengwen Chen, Guodong Long, Chengqi Zhang, and
  Philip~S. Yu.
\newblock A comprehensive survey on graph neural networks.
\newblock \emph{IEEE Transactions on Neural Networks and Learning Systems},
  32\penalty0 (1):\penalty0 4--24, 2021.

\bibitem[Xu et~al.(2018)Xu, Li, Tian, Sonobe, ichi Kawarabayashi, and
  Jegelka]{Xu2018}
Keyulu Xu, Chengtao Li, Yonglong Tian, Tomohiro Sonobe, Ken ichi Kawarabayashi,
  and Stefanie Jegelka.
\newblock Representation learning on graphs with jumping knowledge networks.
\newblock In \emph{ICML}, 2018.

\bibitem[Xu et~al.(2019)Xu, Hu, Leskovec, and Jegelka]{Xu2019}
Keyulu Xu, Weihua Hu, Jure Leskovec, and Stefanie Jegelka.
\newblock How powerful are graph neural networks?
\newblock In \emph{ICLR}, 2019.

\bibitem[Yang(2019)]{Yang2019}
Greg Yang.
\newblock Tensor programs {I}: Wide feedforward or recurrent neural networks of
  any architecture are {G}aussian processes.
\newblock In \emph{NeurIPS}, 2019.

\bibitem[Zhou et~al.(2020)Zhou, Cui, Zhang, Yang, Liu, Wang, Li, and
  Sun]{Zhou2020}
Jie Zhou, Ganqu Cui, Zhengyan Zhang, Cheng Yang, Zhiyuan Liu, Lifeng Wang,
  Changcheng Li, and Maosong Sun.
\newblock Graph neural networks: A review of methods and applications.
\newblock \emph{AI Open}, 1:\penalty0 57--81, 2020.

\bibitem[Zhu(2008)]{Zhu2008}
Xiaojin Zhu.
\newblock Semi-supervised learning literature survey.
\newblock Technical Report TR1530, University of Wisconsin-Madison, 2008.

\end{thebibliography}
\bibliographystyle{iclr2023_conference}

\newpage
\appendix

\section{Code}
Code is available at \update{\url{https://github.com/niuzehao/gnn-gp}}.

\section{Proofs and Additional Theorems}

\subsection{Proof of Theorem~\ref{thm:gcn-gp}}
We use mathematical induction. First, write the matrix form of~\eqref{eqn:GCN.2} as follows:
\[
\underset{N\times d_l}{Y^{(l)}} =\underset{N\times d_{l-1}}{X^{(l-1)}}\ \underset{d_{l-1}\times d_l}{W^{(l)}}, \quad
\underset{N\times d_l}{Z^{(l)}} =\underset{N\times 1}{\ones}\ \underset{1\times d_l}{b^{(l)}}+\underset{N\times N}{A}\ \underset{N\times d_l}{Y^{(l)}}, \quad
\underset{N\times d_l}{X^{(l)}} =\underset{N\times d_l}{\phi(Z^{(l)})}.
\]
Assume each column of $Z^{(l-1)}$ is iid, then $X^{(l-1)}$ also has iid columns. Hence, $y^{(l)}_i(x)$ is a sum of iid terms and thus normal. Also, $(y^{(l)}_i(x_1),\cdots,y^{(l)}_i(x_N))$ are jointly multivariate normal, and identically distributed for different $i$; so they form a GP.

Moreover, terms like $y^{(l)}_i(x)$ and $y^{(l)}_{i'}(x')$ with $i\neq i'$ are jointly Gaussian, with zero covariance:
\[
\cov(y^{(l)}_i(x),y^{(l)}_{i'}(x')) =\sum_{j=1}^{d_{l-1}}\sum_{j'=1}^{d_{l-1}}\cov\left(W_{ji}^{(l)}x_j^{(l-1)}(x), W_{j'i'}^{(l)} x_{j'}^{(l-1)}(x')\right)=0.
\]
Thus, they are independent, despite the fact that they may share the same $X^{(l-1)}$ terms. In conclusion, each column of $Y^{(l)}$ is an iid GP. Hence, each column of $Z^{(l)}$ is also an iid GP $\mathcal{N}(0,K^{(l)})$.

The covariance $K^{(l)}$ can be computed recursively. Similar to the case of a fully connected network, the covariance of $Y^{(l)}$ is
\[
\mean[y_i^{(l)}(x)y_i^{(l)}(x')] =\sigma_w^2\mean[x_j^{(l-1)}(x)x_j^{(l-1)}(x')]=\sigma_w^2C^{(l-1)}(x,x'),
\]
where\[
C^{(l-1)}(x,x')=\mean_{z\sim\mathcal{N}(0,K^{(l-1)})}[\phi(z_j(x))\phi(z_j(x'))]
\quad\text{for any $j$}.
\]
In the matrix form, this is to say
$
\cov(Y_j^{(l)})=\sigma_w^2\mean[X_j^{(l-1)}{X_j^{(l-1)}}^T]=\sigma_w^2C^{(l-1)}
$.
Then, because $Z_i^{(l)}=AY_i^{(l)}+\ones_{N\times 1} b_i^{(l)}$, we obtain
\[
K^{(l)} =\sigma_b^2\ones_{N\times N}+A\cov(Y_i^{(l)})A^T
=\sigma_b^2\ones_{N\times N}+\sigma_w^2AC^{(l-1)}A^T,
\]
which concludes the proof.

\subsection{Proof of Lemma~\ref{lem:universality-lemma}}
We know that $\phi(w\cdot x)$ is the feature mapping of the arc-cosine kernel. It suffices to show for different $x_1,\ldots,x_m\in S$, the $\phi(w,x_i)$'s are linearly independent.

We prove this by contradiction. Assume there exist $c_1,\ldots, c_m$ not simultaneously zero, such that 
\begin{equation}\label{eqn:univ}
\sum_{i=1}^m c_i\phi(w\cdot x_i)\equiv 0.
\end{equation}
WLOG, we further assume that $m$ is the smallest integer that satisfies~\eqref{eqn:univ}. Let $e_1=(1,0,\cdots,0)$; then $x\cdot e_1>0,\forall x\in S$. Assume $x_m\cdot e_1=\min_{1\leq i\leq m} \{x_i\cdot e_1\}>0$. Then, for $1\leq i\leq m-1$,
\[
\dfrac{x_i\cdot x_m}{x_i\cdot e_1}<\dfrac{1}{x_m\cdot e_1}=\dfrac{x_m\cdot x_m}{x_m\cdot e_1}.
\]
Therefore, there exists $t$ such that
\[
\dfrac{x_i\cdot x_m}{x_i\cdot e_1}<t<\dfrac{x_m\cdot x_m}{x_m\cdot e_1},\quad 1\leq i\leq m-1.
\]
Let $w=x_m-te_1$; then
\[
w\cdot x_m>0, \quad w\cdot x_i<0,\quad 1\leq i\leq m-1.
\]
Using~\eqref{eqn:univ} we know $c_m(w\cdot x_m)=0$; so $c_m=0$. Thus, $c_1,\cdots, c_{m-1}$ and $\phi(w\cdot x_1),\cdots,\phi(w\cdot x_{m-1})$ also satisfy~\eqref{eqn:univ}, with a smaller number $m$, which forms a contradiction.

\subsection{Proof of Theorem~\ref{thm:universality}}
From the recursive relation~\eqref{eqn:recursion-gcn}--\eqref{eqn:recursion-gcn.K}, we know $C^{(0)}$ is element-wise non-negative. Hence, $K^{(1)}$ is element-wise non-negative,  $C^{(1)}$ is element-wise positive, and $K^{(2)}$ is element-wise positive.

Assume the Cholesky decomposition $K^{(2)}=LL^T$ and denote $L=(l_1,\cdots,l_n)^T$. Let $r_i=\norm{l_i}{2}>0$ and $x_i=\frac{l_i}{r_i}$. Then, $x_1=\pm e_1$, and WLOG we assume $x_1=e_1$. Then for each $x_i$, we know
\[
x_i\cdot e_1=\dfrac{l_i\cdot l_1}{r_ir_1}=\dfrac{K^{(2)}(i,1)}{r_ir_1}>0
\]
and each $\norm{x_i}{2}=1$. Therefore, $x_i\in S$ is defined in Lemma~\ref{lem:universality-lemma}.

By the assumption that $C^{(0)}$ does not contain two linearly dependent rows, we know so are $K^{(1)}$ and $K^{(2)}$. Thus, no two $l_i$'s are linearly dependent and hence $x_i$'s are pairwise distinct. Using the universality of the arc-cosine kernel (Lemma~\ref{lem:universality-lemma}), we know $C^{(2)}=g(K^{(2)})$ is positive definite. Hence, $K^{(3)}$ is positive definite. An induction argument on the layer index completes the proof.

\subsection{Proof of Theorem~\ref{thm:limit-gnngp}}
For a covariance matrix $K$, denote its standard deviation and correlation by
\begin{align*}
\sigma_K(x)&=\sqrt{K(x,x)},\\
\rho_K(x,x')&=\dfrac{K(x,x')}{\sqrt{K(x,x)K(x',x')}}.
\end{align*}
From the closed-form formula~\eqref{eqn:C} for the arc-cosine kernel, we have
\begin{align*}
\rho_{C^{(l)}}(x,x')&=\dfrac{1}{\pi}\left(\sin\theta^{(l)}_{x,x'}+(\pi-\theta^{(l)}_{x,x'})\cos\theta^{(l)}_{x,x'}\right),\\
\rho_{K^{(l)}}(x,x')&=\cos\theta^{(l)}_{x,x'}.
\end{align*}
We define the correlation mapping from $\rho_{K^{(l)}}$ to $\rho_{C^{(l)}}$:
\[
f:\cos\theta\mapsto\frac{1}{\pi}\left(\sin\theta+(\pi-\theta)\cos\theta\right),\quad\theta\in[0,\pi].
\]
We first establish a few properties of $f$. A pictorial illustration of $f$ is given in Figure~\ref{fig:f}.

\begin{figure}[h]
  \centering
  \includegraphics[scale=0.5]{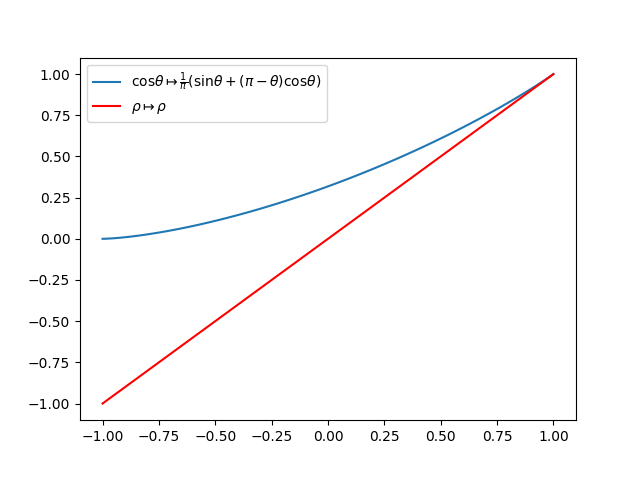}
  \caption{Correlation mapping $f$.}
  \label{fig:f}
\end{figure}

\begin{lemma}\label{lem:f}
The following facts hold.
\begin{enumerate}[leftmargin=*]
\item $f$ is increasing and it is a contraction mapping on $[-1,1]$.
  
\item $f(\rho)\geq\rho$ and the equality holds only when $\rho=1$.
\end{enumerate}
\end{lemma}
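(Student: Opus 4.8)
\textbf{Proof proposal for Lemma~\ref{lem:f}.}

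The plan is to treat $f$ as a scalar function of $\theta \in [0,\pi]$ and translate all claims into statements about $\theta$, then verify them by elementary calculus. Write $\rho = \cos\theta$, so that as $\rho$ ranges over $[-1,1]$, $\theta$ ranges over $[0,\pi]$ with $\tfrac{d\rho}{d\theta} = -\sin\theta \le 0$. The composite map $\rho \mapsto f(\rho)$ is the composition of $\rho \mapsto \theta = \arccos\rho$ with $\theta \mapsto g(\theta) := \tfrac{1}{\pi}(\sin\theta + (\pi-\theta)\cos\theta)$. First I would compute $g'(\theta) = \tfrac{1}{\pi}(\cos\theta - \cos\theta + (\pi-\theta)(-\sin\theta)) = -\tfrac{1}{\pi}(\pi-\theta)\sin\theta \le 0$ on $[0,\pi]$, with equality only at the endpoints. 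Since both $\arccos$ and $g$ are decreasing, their composition $f$ is increasing; this gives monotonicity.

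For the contraction claim, I would use the chain rule: $\tfrac{df}{d\rho} = g'(\theta)\cdot\tfrac{d\theta}{d\rho} = \left(-\tfrac{1}{\pi}(\pi-\theta)\sin\theta\right)\cdot\left(-\tfrac{1}{\sin\theta}\right) = \tfrac{\pi-\theta}{\pi} = 1 - \tfrac{\theta}{\pi}$. This derivative lies in $[0,1]$ for $\theta\in[0,\pi]$, and equals $1$ only at $\theta = 0$ (i.e.\ $\rho = 1$). Hence $|f(\rho_1) - f(\rho_2)| \le |\rho_1 - \rho_2|$, so $f$ is $1$-Lipschitz, i.e.\ a (non-strict) contraction on $[-1,1]$. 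I would note the derivative computation is valid for $\theta\in(0,\pi)$ where $\sin\theta\ne 0$; continuity of $f$ at the endpoints extends the Lipschitz bound to the closed interval. One small subtlety worth a remark: the derivative approaches $1$ as $\rho\to 1$, so $f$ is not a strict contraction near $\rho=1$ — this is exactly why the fixed point $\rho=1$ in Theorem~\ref{thm:limit-gnngp} is approached but the convergence rate degrades, and why case~1 there only claims monotone convergence rather than geometric.

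For the second part, $f(\rho) \ge \rho$ with equality iff $\rho = 1$: in $\theta$-coordinates this reads $g(\theta) \ge \cos\theta$, i.e.\ $\sin\theta + (\pi-\theta)\cos\theta \ge \pi\cos\theta$, i.e.\ $\sin\theta \ge \theta\cos\theta$, i.e.\ $h(\theta) := \sin\theta - \theta\cos\theta \ge 0$. I would verify this by noting $h(0) = 0$ and $h'(\theta) = \cos\theta - \cos\theta + \theta\sin\theta = \theta\sin\theta \ge 0$ on $[0,\pi]$, with $h'(\theta) > 0$ on $(0,\pi)$; hence $h$ is strictly increasing away from $0$ and $h(\theta) > 0$ for $\theta\in(0,\pi]$, with $h(\theta) = 0$ only at $\theta = 0$, i.e.\ $\rho = 1$. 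I do not anticipate a genuine obstacle here — everything reduces to sign analysis of first derivatives of explicit trigonometric expressions — but the one point requiring care is handling the endpoints $\theta\in\{0,\pi\}$ (equivalently $\rho\in\{-1,1\}$) separately from the open interval, since the chain-rule computation of $df/d\rho$ divides by $\sin\theta$; continuity arguments patch this cleanly.
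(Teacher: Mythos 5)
Your proof is correct. For part~1 you do exactly what the paper does: compute $\frac{df}{d\rho}=g'(\theta)\cdot\frac{d\theta}{d\rho}=\frac{\pi-\theta}{\pi}\in[0,1]$ by the chain rule, concluding monotonicity and the (non-strict, $1$-Lipschitz) contraction property; your extra care about the endpoints where $\sin\theta=0$, and your remark that $f$ is not a strict contraction near $\rho=1$, are welcome refinements of the same argument. For part~2 you take a genuinely different route: the paper deduces $f(\rho)\ge\rho$ from $f(1)=1$ combined with the $1$-Lipschitz bound (so $1-f(\rho)\le 1-\rho$), leaving the strictness claim as ``clearly,'' whereas you verify the inequality directly by reducing it to $h(\theta):=\sin\theta-\theta\cos\theta\ge 0$ with $h(0)=0$ and $h'(\theta)=\theta\sin\theta>0$ on $(0,\pi)$. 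Your version is slightly longer but self-contained and yields the equality case ($\rho=1$ only) explicitly, while the paper's version is shorter and reuses part~1; the strictness in the paper's argument implicitly relies on $f'(\rho)<1$ for $\rho<1$, which your direct computation makes unnecessary. Both are valid; no gaps.
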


\begin{proof}
For part 1, denote $\rho=\cos\theta\in[-1,1]$. Then,
\[
f'(\rho)=\dfrac{\partial f(\rho)}{\partial\theta}\dfrac{\partial\theta}{\partial\rho}=\dfrac{-1}{\pi}(\pi-\theta)\sin\theta\cdot\dfrac{-1}{\sin\theta}=\dfrac{\pi-\theta}{\pi}\in[0,1].
\]
Thus, $f$ is increasing and it is a contraction mapping on $[-1,1]$.

For part 2, we know that $f(1)=1$. Using the contraction mapping, we know $f(\rho)\geq\rho$ for $\rho\in[-1,1]$. Clearly, the equality holds only when $\rho=1$.
\end{proof}

After the correlation mapping $f$, we further consider the covariance mapping $g : K^{(l)} \mapsto C^{(l)}$ defined in~\eqref{eqn:recursion-gcn} (and equivalently in~\eqref{eqn:g}). We establish a few properties of $g$ below.

\begin{lemma}\label{lem:g}
For positive semi-definite matrices $B$, the following facts hold.
\begin{enumerate}[leftmargin=*]
\item $g(B) = \frac{1}{2} D^{1/2} f(D^{-1/2} B D^{-1/2}) D^{1/2}$, where $D=\diag{B}\in\mathbb{R}^{N\times N}$ and $f$ is elementwise.

\item $g(B)\succ \frac{1}{2} B$ and $\diag{g(B)}=\frac{1}{2}\diag{B}$, where $\succ$ means elementwise greater.

\item $\trace{Ag(B)A^T}\leq\frac{1}{2}\norm{A}{2}^2\trace{B}$, where $\norm{A}{2}$ is the spectral norm of matrix $A$.
\end{enumerate}
\end{lemma}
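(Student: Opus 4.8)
The plan is to prove the three parts in sequence, deriving part~1 from the closed-form arc-cosine expression and then reading off parts~2 and~3 as short corollaries.

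For part~1, I would start from the closed form~\eqref{eqn:C}, which is precisely $g$ evaluated entrywise: $g(B)_{xx'}=\frac{1}{2\pi}\sqrt{B_{xx}B_{x'x'}}\bigl(\sin\theta_{xx'}+(\pi-\theta_{xx'})\cos\theta_{xx'}\bigr)$ with $\theta_{xx'}=\arccos\bigl(B_{xx'}/\sqrt{B_{xx}B_{x'x'}}\bigr)$. Since $B\succeq0$, the $2\times2$ principal minors give $|B_{xx'}|\le\sqrt{B_{xx}B_{x'x'}}$, so each $\theta_{xx'}$ is well defined and $\rho:=B_{xx'}/\sqrt{B_{xx}B_{x'x'}}\in[-1,1]$; recognizing the bracket divided by $\pi$ as $f(\rho)$ and writing $\rho=[D^{-1/2}BD^{-1/2}]_{xx'}$ together with $\sqrt{B_{xx}B_{x'x'}}=[D^{1/2}]_{xx}[D^{1/2}]_{x'x'}$, the entrywise identities assemble into $g(B)=\frac12 D^{1/2}f(D^{-1/2}BD^{-1/2})D^{1/2}$ with $f$ applied elementwise. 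The one subtlety is a node $x$ with $B_{xx}=0$: then $z(x)=0$ almost surely under $\tN(0,B)$, so the $x$-th row and column of $g(B)=\mean[\phi(z)\phi(z)^T]$ vanish, and since $|f|\le1$, multiplying by the zero entry of $D^{1/2}$ also zeroes the $x$-th row and column of the right-hand side under any convention for the undefined correlations, so the identity persists. I expect this bookkeeping (and the degenerate case) to be the only place requiring care.

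Part~2 is then immediate from part~1. The matrix $\widehat B:=D^{-1/2}BD^{-1/2}$ has unit diagonal and $f(1)=1$ by Lemma~\ref{lem:f}, so $\diag{g(B)}=\frac12\diag{D^{1/2}\widehat B D^{1/2}}=\frac12\diag{B}$. For the inequality, Lemma~\ref{lem:f}(2) gives $f(\rho)\ge\rho$ entrywise on $[-1,1]$, hence $f(\widehat B)\ge\widehat B$ entrywise; multiplying on each side by the nonnegative diagonal $D^{1/2}$ preserves the entrywise order, so $g(B)=\frac12 D^{1/2}f(\widehat B)D^{1/2}\ge\frac12 D^{1/2}\widehat B D^{1/2}=\frac12 B$ entrywise, with strict inequality at every off-diagonal entry whose two nodes are not perfectly correlated (the equality case of Lemma~\ref{lem:f}(2)).

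For part~3, the idea is to convert the claim into a Loewner-order inequality. Write $\trace{Ag(B)A^T}=\trace{g(B)A^TA}=\langle g(B),A^TA\rangle$. Two observations finish it: first, $g(B)=\mean[\phi(z)\phi(z)^T]\succeq0$ as an average of positive semidefinite rank-one matrices; second, $A^TA\preceq\norm{A}{2}^2 I$ by definition of the spectral norm. Since $\norm{A}{2}^2 I-A^TA\succeq0$ and $g(B)\succeq0$, the trace of their product is nonnegative, i.e.\ $\langle g(B),A^TA\rangle\le\norm{A}{2}^2\trace{g(B)}$; substituting $\trace{g(B)}=\frac12\trace{B}$ from part~2 yields the bound. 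Overall, part~1 carries essentially all the work, and parts~2--3 are one-line consequences, the only genuinely new ingredient being the trace inequality for a product of positive semidefinite matrices used in part~3.
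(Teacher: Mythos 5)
Your proposal is correct and, for parts 1 and 2, follows the paper's own argument exactly: the paper dismisses part 1 as immediate from the closed form~\eqref{eqn:C} (you spell out the entrywise bookkeeping and, usefully, the degenerate case $B_{xx}=0$, which the paper glosses over), and part 2 is the same two observations $f(1)=1$ and $f(\rho)\ge\rho$ combined with conjugation by the nonnegative diagonal $D^{1/2}$. The only divergence is in part 3: the paper invokes von Neumann's trace inequality to bound $\trace{A^TA\,g(B)}$ by $\norm{A^TA}{2}\sum_i\lambda_i(g(B))$, whereas you use the Loewner bound $A^TA\preceq\norm{A}{2}^2 I$ together with the nonnegativity of the trace of a product of two positive semidefinite matrices; the two routes are equivalent in strength here and both reduce the claim to $\trace{g(B)}=\tfrac12\trace{B}$ from part 2, with your version being slightly more elementary since it avoids any eigenvalue-rearrangement machinery. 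One small caution mirroring the paper's own looseness: since the diagonal entries satisfy $g(B)_{xx}=\tfrac12 B_{xx}$ with equality, the relation ``$\succ$'' in part 2 can only be read as entrywise $\ge$ (strict off-diagonal except at perfectly correlated pairs), exactly as you note.
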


\begin{proof}
For part 1, it is obvious according to~\eqref{eqn:C}.

For part 2, using the properties of $f$ we straightforwardly obtain
\[
g(B) =\frac{1}{2} D^{1/2}f(D^{-1/2}BD^{-1/2})D^{1/2}
\succ \frac{1}{2} D^{1/2}(D^{-1/2}BD^{-1/2})D^{1/2}
=\frac{1}{2} B.
\]

For part 3, note that $g(B)$ is positive semi-definite. Using von Neumann's trace inequality,
\[
\trace{Ag(B)A^T}=\trace{A^TAg(B)}
\leq\sum_{i=1}^N\lambda_i(A^TA)\lambda_i(g(B))
\leq\norm{A^TA}{2}\sum_{i=1}^N\lambda_i(g(B)),
\]
where the last term is equal to $\norm{A}{2}^2\trace{g(B)}=\frac{1}{2}\norm{A}{2}^2\trace{B}$.
\end{proof}

Now, we are ready to prove Theorem~\ref{thm:limit-gnngp}.

\begin{proof}[Proof of Theorem~\ref{thm:limit-gnngp}, part 1]
We focus on uniform convergence over all pairwise correlations. Denote
\[
\rho_{\min}(K)=\min_{x,x'}\rho_K(x,x')\in[-1,1].
\]
To simplify notation, we temporarily let $C=C^{(l)}$. Then,
\begin{align*}
K^{(l+1)}(x,x')&=\sigma_w^2\sum_{v,v'}A_{xv}A_{x'v'}C(v,v')\\
&=\sigma_w^2\sum_{v,v'}A_{xv}A_{x'v'}\sigma_{C}(v)\sigma_{C}(v')\rho_{C}(v,v')\\
&\geq\rho_{\min}(C)\cdot\sigma_w^2\sum_{v,v'}A_{xv}A_{x'v'}\sigma_{C}(v)\sigma_{C}(v')\\
&=\rho_{\min}(C)\cdot\sigma_w^2\left(\sum_{v}A_{xv}\sigma_{C}(v)\right)\left(\sum_{v'}A_{x'v'}\sigma_{C}(v')\right).
\end{align*}
Meanwhile,
\begin{align*}
K^{(l+1)}(x,x)&=\sigma_w^2\sum_{v,v'}A_{xv}A_{xv'}C(v,v')\\
&\leq\sigma_w^2\sum_{v,v'}A_{xv}A_{xv'}\sigma_{C}(v)\sigma_{C}(v')\\
&=\sigma_w^2\left(\sum_{v}A_{xv}\sigma_{C}(v)\right)^2
\end{align*}
and similarly
\[
K^{(l+1)}(x',x')\leq\sigma_w^2\left(\sum_{v'}A_{x'v'}\sigma_{C}(v')\right)^2.
\]
Combining the above inequalities, we obtain
\[
\rho(K^{(l+1)})(x,x')=\dfrac{K^{(l+1)}(x,x')}{\sqrt{K^{(l+1)}(x,x)K^{(l+1)}(x',x')}}\geq\rho_{\min}(C^{(l)}),\quad\forall x,x',
\]
and thus
\[
\rho_{\min}(K^{(l+1)})\geq\rho_{\min}(C^{(l)}).
\]
From the correlation mapping, we know
\[
\rho_{\min}(C^{(l)})=f(\rho_{\min}(K^{(l)})).
\]
Therefore, by the properties of $f$,
\[
\rho_{\min}(K^{(l+1)})\geq f(\rho_{\min}(K^{(l)}))\geq\rho_{\min}(K^{(l)}).
\]
Thus, $\rho_{\min}(K^{(l)})\nearrow$ some $\rho_{\infty}\in[-1,1]$. Because $\rho_{\infty}=f(\rho_{\infty})$, we conclude that $\rho_{\infty}=1$.
\end{proof}

\begin{proof}[Proof of Theorem~\ref{thm:limit-gnngp}, part 2]
Consider each $h:K^{(l)}\mapsto K^{(l+1)}$:
\[
K^{(l+1)}=\sigma_b^2\ones_{N\times N}+\sigma_w^2 AC^{(l)}A^T=\sigma_b^2+\sigma_w^2 Ag(K^{(l)})A^T:=h(K^{(l)}).
\]
Let $\delta=\tfrac{1}{2}\sigma_w^2\lambda^2<1$. Using part 3 of Lemma~\ref{lem:g},
\begin{align*}
\trace{K^{(l+1)}}&=\trace{\sigma_b^2 \ones_{N\times N}}+\trace{\sigma_w^2 Ag(K^{(l)})A^T}\\
&=N\sigma_b^2+\sigma_w^2\trace{Ag(K^{(l)})A^T}\\
&\leq N\sigma_b^2+\tfrac{1}{2}\sigma_w^2\lambda^2\trace{K^{(l)}}\\
&=N\sigma_b^2+\delta\cdot\trace{K^{(l)}}.
\end{align*}
Therefore, for sufficiently large $l$,
\[
\trace{K^{(l)}}\leq\dfrac{N\sigma_b^2}{1-\delta}+1.
\]
Because each $K^{(l)}$ is positive semi-definite, we know $K^{(l)}$ is bounded. Thus, a subsequence of $K^{(l)}\to$ some matrix.
\end{proof}

\begin{proof}[Proof of Theorem~\ref{thm:limit-gnngp}, part 3]
Define $\kappa^{(l)}:=K^{(l)}/c_l$. We have the following recursive relationship between $\kappa^{(l)}$ and $\kappa^{(l+1)}$:
\[
\kappa^{(l+1)}=\tfrac{1}{c_{l+1}}(\sigma_b^2 \ones_{N\times N}+\sigma_w^2 Ag(K^{(l)})A^T)=\tfrac{\sigma_b^2}{c_{l+1}}\ones_{N\times N}+\tfrac{2}{\lambda^2}Ag(\kappa^{(l)})A^T.
\]
Then,
\[
\trace{\kappa^{(l+1)}}=\trace{\tfrac{\sigma_b^2}{c_{l+1}}\ones_{N\times N}}+\tfrac{2}{\lambda^2}\trace{Ag(\kappa^{(l)})A^T}
\leq\tfrac{N\sigma_b^2}{c_{l+1}}+\trace{\kappa^{(l)}},
\]
where the inequality results from part 3 of Lemma~\ref{eqn:g}. Therefore, $\kappa^{(l)}$ is bounded.

Using the Perron-Frobenius theorem, we know the eigenspace $V$ corresponding to $\lambda$ is one-dimensional, and there exists $v\in V$ such that $\norm{v}{2}=1$ and $v\succ 0$. Then,
\[
v^T\kappa^{(l+1)}v =\tfrac{\sigma_b^2}{c_{l+1}}v^T\ones_{N\times N}v+\tfrac{2}{\lambda^2}v^TAg(\kappa^{(l)})A^Tv
\geq 2v^Tg(\kappa^{(l)})v
\geq v^T\kappa^{(l)}v.
\]
Therefore, $v^T\kappa^{(l)}v$ has a limit $c>0$.

Let $P=\mathrm{Proj}_{V^\perp}=I-\mathrm{Proj}_{V}=I-vv^T$. Then $PA=AP$, $\norm{PA}{2}=\lambda'<\lambda$, and
\[
\trace{Pg(\kappa^{(l)})P^T} =\trace{g(\kappa^{(l)})}-v^Tg(\kappa^{(l)})v
\leq\tfrac{1}{2}\trace{\kappa^{(l)}}-\tfrac{1}{2}v^T\kappa^{(l)}v
=\tfrac{1}{2}\trace{P\kappa^{(l)}P^T}.
\]
Therefore,
\begin{align*}
\trace{P\kappa^{(l+1)}P^T}&=\trace{\tfrac{\sigma_b^2}{c_{l+1}}P\ones_{N\times N}P^T}+\tfrac{2}{\lambda^2}\trace{PAg(\kappa^{(l)})A^TP^T}\\
&\leq\tfrac{N\sigma^2}{c_{l+1}}+\tfrac{2}{\lambda^2}\trace{PAPg(\kappa^{(l)})P^TA^TP^T}\\
&\leq\tfrac{N\sigma^2}{c_{l+1}}+\tfrac{2}{\lambda^2}\norm{PA}{2}^2\trace{Pg(\kappa^{(l)})P^T}\\
&\leq\tfrac{N\sigma^2}{c_{l+1}}+(\tfrac{\lambda'}{\lambda})^2\trace{P\kappa^{(l)}P^T}.
\end{align*}
Thus, $P\kappa^{(l)}P^T\to 0$.

Combining the results, we have $\kappa^{(l)}\to cvv^T$.
\end{proof}

\subsection{Results for Multi-Layer Perceptrons}\label{sec:MLP}
Theorem~\ref{thm:limit-gnngp} has parallel results for fully connected layers.

\begin{theorem}\label{thm:limit-nngp}
  Let $A$ be the identity matrix. The following results hold as $l\to\infty$.
  \begin{enumerate}[leftmargin=*]
  \item When $\sigma_w^2 < 2$ and $\sigma_b^2 > 0$, then $K^{(l)} \to q \ones_{N\times N}$ where $q=\dfrac{\sigma_b^2}{1-\sigma_w^2/2}$.
  \item When $\sigma_w^2 > 2$, let $c_l=(\sigma_w^2/2)^l$; then $K^{(l)}/c_l \to vv^T$ where $v_x = \left[ \dfrac{\sigma_b^2}{\sigma_w^2/2-1} + K^{(0)}(x,x) \right]^{\frac{1}{2}}$.
  \end{enumerate}
\end{theorem}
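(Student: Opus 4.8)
The plan is to treat the diagonal of $K^{(l)}$ and the pairwise correlations separately, following the template of the proof of Theorem~\ref{thm:limit-gnngp} but exploiting two simplifications forced by $A=I$. First, recursion~\eqref{eqn:recursion-gcn.K} collapses to $K^{(l+1)}=\sigma_b^2\ones_{N\times N}+\sigma_w^2 g(K^{(l)})$, so by part~2 of Lemma~\ref{lem:g} the diagonal splits into $N$ uncoupled scalar affine recursions $s^{(x)}_{l+1}=\sigma_b^2+\tfrac{\sigma_w^2}{2}s^{(x)}_l$ with $s^{(x)}_l:=K^{(l)}(x,x)$. Second, $g$ is positively homogeneous, $g(tB)=tg(B)$ for $t>0$ (immediate from part~1 of Lemma~\ref{lem:g}), which is what makes the scaling by $c_l$ in part~2 close cleanly. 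Throughout I will use that the correlation map $f$ of Lemma~\ref{lem:f} is increasing on $[-1,1]$ with $f(1)=1$ (hence $f\le 1$) and $f(\rho)\ge\rho$ with equality only at $\rho=1$; in particular $G(\rho):=f(\rho)-\rho$ is continuous, nonnegative, and vanishes only at $\rho=1$.

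For part~1, since $\sigma_w^2/2<1$ the scalar recursion has attracting fixed point $q=\sigma_b^2/(1-\sigma_w^2/2)>0$, so $s^{(x)}_l\to q$ geometrically for every node $x$; write $d_l=\min_x s^{(x)}_l\le D_l=\max_x s^{(x)}_l$, both $\to q$ with $D_l-d_l=(\sigma_w^2/2)^l(D_0-d_0)$. For the correlations, let $\rho_l=\rho_{\min}(K^{(l)})$. Bounding the numerator of $\rho_{K^{(l+1)}}(x,x')$ from below (replace $f(\rho_{K^{(l)}}(x,x'))\ge 0$ by $f(\rho_l)$, and both diagonal factors by $d_l$) and the denominator from above (both diagonal factors by $D_l$), the closed form~\eqref{eqn:C} yields
\[
\rho_{l+1}\ \ge\ \frac{\sigma_b^2+\tfrac{\sigma_w^2}{2}d_l\,f(\rho_l)}{\sigma_b^2+\tfrac{\sigma_w^2}{2}D_l}\ ,
\]
whence $1-\rho_{l+1}\le \eta_l+\beta_l(1-\rho_l)$ with $\beta_l=\tfrac{\sigma_w^2}{2}d_l/(\sigma_b^2+\tfrac{\sigma_w^2}{2}D_l)\to\sigma_w^2/2<1$ (using $\sigma_b^2+\tfrac{\sigma_w^2}{2}q=q$) and $\eta_l=\tfrac{\sigma_w^2}{2}(D_l-d_l)/(\sigma_b^2+\tfrac{\sigma_w^2}{2}D_l)\to 0$ geometrically; the standard linear-recursion-with-vanishing-forcing argument gives $\rho_l\to 1$. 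Recombining, $K^{(l)}(x,x')=\rho_{K^{(l)}}(x,x')\sqrt{s^{(x)}_l s^{(x')}_l}\to 1\cdot q$, i.e.\ $K^{(l)}\to q\ones_{N\times N}$.

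For part~2, set $\kappa^{(l)}:=K^{(l)}/c_l$; positive homogeneity turns the recursion into $\kappa^{(l+1)}=\tfrac{\sigma_b^2}{c_{l+1}}\ones_{N\times N}+2g(\kappa^{(l)})$ with $\sigma_b^2/c_{l+1}\to 0$. Now $\sigma_w^2/2>1$, so the scalar diagonal recursion has repelling fixed point $q=\sigma_b^2/(1-\sigma_w^2/2)<0$ and $s^{(x)}_l=c_l(s^{(x)}_0-q)+q$, hence $\kappa^{(l)}(x,x)=v_x^2+q/c_l\to v_x^2$ with $v_x^2=K^{(0)}(x,x)-q=K^{(0)}(x,x)+\sigma_b^2/(\sigma_w^2/2-1)$, matching the stated $v$. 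For the correlations, \eqref{eqn:C} gives the exact per-pair identity $\rho_{K^{(l+1)}}(x,x')=\mu_l+\tau_l\,f(\rho_{K^{(l)}}(x,x'))$ with $\mu_l=\sigma_b^2/\sqrt{s^{(x)}_{l+1}s^{(x')}_{l+1}}\ge 0$ and $\tau_l=\sqrt{(1-\sigma_b^2/s^{(x)}_{l+1})(1-\sigma_b^2/s^{(x')}_{l+1})}\in(0,1]$; the crucial point is the Cauchy--Schwarz bound $\mu_l+\tau_l\le 1$, with $\epsilon_l:=1-\mu_l-\tau_l\le \sigma_b^2(1/s^{(x)}_{l+1}+1/s^{(x')}_{l+1})$ summable since $s^{(x)}_{l+1}\sim c_{l+1}v_x^2\to\infty$. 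Writing $u_l=1-\rho_{K^{(l)}}(x,x')\in[0,2]$, the identity rearranges (using $\mu_l+\tau_l\le1$, $\tau_l\le1$, $f\le1$) to $u_{l+1}=\epsilon_l+\tau_l\bigl(u_l-G(\rho_{K^{(l)}}(x,x'))\bigr)\le u_l-G(\rho_{K^{(l)}}(x,x'))+\epsilon_l$; telescoping gives $\sum_l G(\rho_{K^{(l)}}(x,x'))<\infty$, so $G(\rho_{K^{(l)}}(x,x'))\to 0$, and since $G$ vanishes only at $1$ and correlations lie in a compact set, $\rho_{K^{(l)}}(x,x')\to 1$ for every pair. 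Finally $\kappa^{(l)}$ is bounded (positive semidefinite with convergent diagonal), and any subsequential limit is positive semidefinite with diagonal $v^2$ and all correlations $1$, hence equals $vv^T$; a bounded sequence with a unique subsequential limit converges, so $\kappa^{(l)}\to vv^T$. (Consistency check: $2g(vv^T)=vv^T$, since a rank-one positive semidefinite matrix has all correlations $1$ and $f(1)=1$.)

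The main obstacle is the correlation convergence in part~2: unlike part~1 (and unlike the $\sigma_b^2=0$ case of Theorem~\ref{thm:limit-gnngp}) there is no contraction constant strictly below $1$ available, and unlike parts~2--3 of Theorem~\ref{thm:limit-gnngp} one cannot invoke Perron--Frobenius because $A=I$ is reducible; convergence must instead be extracted from the weak property $f(\rho)>\rho$ for $\rho<1$, the Cauchy--Schwarz bound $\mu_l+\tau_l\le1$, and the geometric decay of the perturbation sequences. A minor point is that the excluded cases are genuine: $\sigma_w^2=2$ is the critical scaling, and $\sigma_w^2<2$ with $\sigma_b^2=0$ produces the degenerate limit $K^{(l)}\to 0$.
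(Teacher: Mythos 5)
Your proof is correct, and it shares the paper's overall skeleton---with $A=I$ the diagonal entries decouple into the scalar affine recursion $s_{l+1}=\sigma_b^2+\tfrac{\sigma_w^2}{2}s_l$ (handled identically in both proofs), and the off-diagonal behavior is then extracted from the properties of $f$ in Lemma~\ref{lem:f}---but your treatment of the off-diagonal step is genuinely different in both parts. For part~1 the paper sandwiches the off-diagonal entries directly: $f(\rho)\ge\rho$ gives $K^{(l+1)}(x,x')\ge\sigma_b^2+\tfrac{\sigma_w^2}{2}K^{(l)}(x,x')$, hence $\liminf_l K^{(l)}(x,x')\ge q$, while Cauchy--Schwarz against the converging diagonal gives $\limsup_l K^{(l)}(x,x')\le q$; your perturbed contraction on $1-\rho_{\min}$ with asymptotic rate $\sigma_w^2/2$ is valid (the key identity $\sigma_b^2+\tfrac{\sigma_w^2}{2}q=q$ is used correctly) but longer, its only extra payoff being an explicit geometric rate for the correlation convergence. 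For part~2 the paper notes $\kappa^{(l)}(x,x')$ is nondecreasing (again from $f(\rho)\ge\rho$) and bounded, and identifies the limit via the fixed-point equation $c=v_xv_{x'}\,f\bigl(c/(v_xv_{x'})\bigr)$, whose only solution is $c=v_xv_{x'}$; you instead work with the exact correlation recursion $\rho_{l+1}=\mu_l+\tau_l f(\rho_l)$, the Cauchy--Schwarz bound $\mu_l+\tau_l\le1$, summability of the defect $\epsilon_l$ (which follows since $s^{(x)}_{l}\sim c_l v_x^2$ grows geometrically), and a telescoping bound forcing $\sum_l\bigl(f(\rho_l)-\rho_l\bigr)<\infty$, hence $\rho_l\to1$ by compactness. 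Your route avoids invoking monotone convergence and passing to the limit inside the recursion, at the cost of more bookkeeping; the paper's is shorter. Both arguments (yours and the paper's alike) tacitly assume the variances $K^{(l)}(x,x)$ stay positive so that correlations are well defined, which holds in part~1 since $\sigma_b^2>0$ and in part~2 unless $\sigma_b^2=0$ and some $K^{(0)}(x,x)=0$; your closing remarks on the excluded boundary cases are apt but not needed for the statement.
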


\begin{proof}[Proof of Theorem~\ref{thm:limit-nngp}, part 1]
Starting with the diagonal elements, we have
\[
K^{(l+1)}(x,x)=\sigma_b^2+\tfrac{\sigma_w^2}{2}K^{(l)}(x,x).
\]
Therefore, $K^{(l)}(x,x)\to q=\dfrac{\sigma_b^2}{1-\sigma_w^2/2}$.

For the off-diagonal elements, we have
\begin{align*}
K^{(l+1)}(x,x')&=\sigma_b^2+\tfrac{\sigma_w^2}{2}\sigma_{K^{(l)}}(x)\sigma_{K^{(l)}}(x')f\left(\rho_{K^{(l)}}(x,x')\right)\\
&\geq\sigma_b^2+\tfrac{\sigma_w^2}{2}\sigma_{K^{(l)}}(x)\sigma_{K^{(l)}}(x')\rho_{K^{(l)}}(x,x')\\
&=\sigma_b^2+\tfrac{\sigma_w^2}{2}K^{(l)}(x,x').
\end{align*}
Therefore,
\[
\liminf_l K^{(l+1)}(x,x')\geq\sigma_b^2+\tfrac{\sigma_w^2}{2}\liminf_l K^{(l)}(x,x')\implies\liminf_l K^{(l)}(x,x')\geq q.
\]
Meanwhile,
\[
\limsup_l K^{(l)}(x,x')\leq\limsup_l\sqrt{K^{(l)}(x,x)K^{(l)}(x',x')}=q.
\]
Thus, $K^{(l)}(x,x')\to q$. In conclustion, $K^{(l)}\to q \ones_{N\times N}$.
\end{proof}

\begin{proof}[Proof of Theorem~\ref{thm:limit-nngp}, part 2]
Again, we start with diagonal elements. Define $\kappa^{(l)}:=K^{(l)}/c_l$. When $\sigma_w^2>2$, we have
\[
\kappa^{(l+1)}(x,x)=\frac{\sigma_b^2}{(\sigma_w^2/2)^{l+1}}+\kappa^{(l)}(x,x).
\]
Therefore,
\[
\lim_{l\to\infty}\kappa^{(l)}(x,x)=\sum_{l=0}^\infty\dfrac{\sigma_b^2}{(\sigma_w^2/2)^{l+1}}+\kappa^{(0)}(x,x)=\dfrac{\sigma_b^2}{\sigma_w^2/2-1}+K^{(0)}(x,x)=v_x^2.
\]
For the off-diagonal elements, we have
\begin{align*}
\kappa^{(l+1)}(x,x')&=\tfrac{\sigma_b^2}{c_{l+1}}+\sigma_{\kappa^{(l)}}(x)\sigma_{\kappa^{(l)}}(x')f\left(\rho_{\kappa^{(l)}}(x,x')\right)\\
&\geq\sigma_{\kappa^{(l)}}(x)\sigma_{\kappa^{(l)}}(x')\rho_{\kappa^{(l)}}(x,x')\\
&=\kappa^{(l)}(x,x').
\end{align*}
Thus, $\kappa^{(l)}(x,x')$ has a limit $c$. Taking limits on both sides, we have
\[
c=v_xv_{x'}\cdot f\left(c/(v_xv_{x'})\right),
\]
which suggests that $c=v_xv_{x'}$. In conclustion, $\kappa^{(l)}\to vv^T$.
\end{proof}

\section{Composing GraphSAGE}\label{sec:sage.update}
As before, $X$ denotes pre-activation rather post activation. A GraphSAGE layer has two weight matrices, $W_1$ and $W_2$. Hence, we use two variance parameters $\sigma_{w_1}$ and $\sigma_{w_2}$ for them, respectively. For simplicity, we focus on the mean-aggregation version of GraphSAGE, where $A$ is effectively the random-walk matrix. The updates are:
\begin{alignat*}{2}
  \text{GraphSAGE}: &\quad& X &\gets \phi(X) W_1 + A \phi(X) W_2 \\
  &\quad& K &\gets \sigma_{w_1}^2 g(K) + \sigma_{w_2}^2 A g(K) A^T \\
  &\quad& Q &\gets \begin{bmatrix} \sigma_{w_1} P & \sigma_{w_2} AP \end{bmatrix}
  \quad\text{where}\quad P = \chol(g(QQ^T)).
\end{alignat*}
Note that we do not use a bias term. If it is desired, one can easily add it to the updates following the example of GCN.

\section{Experiment Details}\label{sec:exp.detail}

\begin{table}[b]
  \caption{Dataset summary. The edge column lists the number of directed edges.}
  \label{tab:dataset}
  \begin{center}
  \begin{tabular}{cccccc}
    \hline
    Dataset & Task  & Nodes & Edges & Features & Train/Val/Test Ratio \\
    \hline
    Cora  & Classification & 2,708  & 5,429  & 1,433  & 0.05/0.18/0.37 \\
    Citeseer  & Classification & 3,327  & 4,732  & 3,703  &  0.04/0.15/0.30 \\
    PubMed  & Classification & 19,717  & 44,338  & 500  &  0.003/0.025/0.051 \\
    ArXiv & Classification & 169,343  & 1,166,243  & 128  & 0.54/0.18/0.28 \\
    Reddit & Classification & 232,965  & 11,606,919  & 602  & 0.66/0.10/0.24 \\
    Chameleon & Regression & 2,277  & 36,101  & 3,132  & 0.48/0.32/0.20 \\
    Squirrel & Regression & 5,201  & 217,073  & 3,148  & 0.48/0.32/0.20 \\
    Crocodile & Regression & 11,631  & 180,020  & 13,183  & 0.48/0.32/0.20 \\
    \hline
  \end{tabular}
  \end{center}
\end{table}

\textbf{Datasets.} A summary is given in Table~\ref{tab:dataset}. The datasets Cora/Citeseer/PubMed/Reddit, with predefined training/validation/test splits, are downloaded from the PyTorch Geometric library~\citep{Fey2019} and used as is. The dataset ArXiv comes from the Open Graph Benchmark~\citep{Hu2020}. The datasets Chameleon/Squirrel/Crocodile come from MUSAE~\citep{Rozemberczki2021}. The training/validation/test splits of the former two sets of datasets come from Geom-GCN~\citep{Pei2020}, in accordance with the PyTorch Geometric library. The split for Crocodile is not available, so we conduct a random split with the same 0.48/0.32/0.20 proportion as that used for Chameleon and Squirrel~\citep{Rozemberczki2021}.

\textbf{Dataset preprocessing.} For each graph, we treat the edges as undirected and construct a binary, symmetric adjacent matrix $\underline{A}$. Then, we apply the normalization proposed by~\citet{Kipf2017} to define $A$ used in~\eqref{eqn:GCN}. Specifically, $A = (I_N+D)^{-1/2}(I+\underline{A})(I_N+D)^{-1/2}$, where $D=\diag{\sum_j \underline{A}_{ij}}$. For GraphSAGE and GGP, we instead use the row-normalized version $A=(I_N+D)^{-1}(I_N+\underline{A})$. No feature preprocessing is done except running the GPs on ArXiv, for which we apply a centering on the input features. 

\textbf{Environment.} All experiments are conducted on a Nvidia Quadro GV100 GPU with 32GB of HBM2 memory. The code is written in Python 3.10.4 as distributed with Ubuntu 22.04 LTS. We use PyTorch 1.11.0 and PyTorch Geometric 2.1.0 with CUDA 11.3.

\textbf{Training.} For all datasets except Reddit, we train GCN using the Adam optimizer in full batch for 100 epochs. Reddit is too large for GPU computation and hence we conduct mini-batch training (with a batch size 10240) for 10 epochs.

\textbf{Hyperparameters.}
For classification tasks in Table~\ref{tab:GCN.accuracy}, the hyperparameters are set to $\sigma_b=0.0$, $\sigma_w=1.0$, $L=2$, hidden $=256$, and dropput $=0.5$. GCN is trained with learning rate $0.01$. For regression tasks, they are set to $\sigma_b=\sqrt{0.1}$, $\sigma_w=1.0$, $L=2$, hidden $=256$, and dropput $=0.5$. GCN is trained with learning rate $\sqrt{0.1}$

We choose $C^{(0)}$ to be the inner product kernel. The exception is when working with the low-rank versions, we apply PCA on the input features to ensure the number of features is smaller than the number of landmark nodes.

For Table \ref{tab:GNN.accuracy}:
GCNII uses a 2-layer architecture and sets $\alpha=0.1$, $\lambda=0.5$, $\beta_l=\log(\frac{\lambda}{l}+1)$.
GIN uses $L=2$ and $\epsilon=0.0$.
GraphSAGE uses a 2-layer architecture and sets $\sigma_{w_1}=\sqrt{0.1}$ for Reddit and $\sigma_{w_1}=0.0$ for other datasets, and $\sigma_{w_2}=1.0$.

The nugget $\epsilon$ is chosen using a grid search over $[10^{-3}, 10^1]$.

For GCNGP-X, GNNGP-X and RBF-X, the landmark points are chosen to be the training set for small datasets, while for large datasets (ArXiv and Reddit) the landmark points are a random 1/50 of the training set.

\textbf{Kernels.} For the \textbf{RBF} kernel,
\[
K(x,x')=\exp(-\gamma\Vert x-x'\Vert_2^2),
\]
where $\gamma$ is chosen using a grid search over $[10^{-2}, 10^2]$. For the \textbf{GGP} kernel,
\[
K=AK_0A^T,\quad K_0(x,x')=(x^Tx'+c)^d,
\]
where $c=5.0, d=3.0$. Note that we use only the kernel but not the robust-max likelihood nor the ELBO training proposed by~\citet{Ng2018}, because their code written in Python 2 is outdated and cannot be used. Note also that GGP-X in~\citet{Ng2018} has a different meaning---training GGP by using additionally the 500 nodes in the validation set---while we use GGP-X to denote the Nystr\"{o}m approximation of GGP.

\vskip 10pt
\begin{flushright}
\small\framebox{\parbox{3.2in}{
The submitted manuscript has been created by UChicago Argonne, LLC,
Operator of Argonne National Laboratory ("Argonne").  Argonne, a
U.S. Department of Energy Office of Science laboratory, is operated
under Contract No.  DE-AC02-06CH11357.  The U.S. Government retains
for itself, and others acting on its behalf, a paid-up nonexclusive,
irrevocable worldwide license in said article to reproduce, prepare
derivative works, distribute copies to the public, and perform
publicly and display publicly, by or on behalf of the Government.}} \normalsize
\end{flushright}

\end{document}